\documentclass[10pt,journal,compsoc]{IEEEtran}

\ifCLASSOPTIONcompsoc
  \usepackage[nocompress]{cite}
\else
  \usepackage{cite}
\fi
\usepackage{amsmath,amssymb,amsfonts,amsthm}
\usepackage{graphicx}
\usepackage{booktabs,multirow,array}
\usepackage[table]{xcolor}
\usepackage{algorithm}
\usepackage{algpseudocode}
\usepackage[hidelinks]{hyperref}

\newtheorem{lemma}{Lemma}
\newtheorem{theorem}{Theorem}
\newtheorem{assumption}{Assumption}
\theoremstyle{remark}
\newtheorem{remark}{Remark}

\begin{document}

\title{When Reasoning Is Not Enough: Proactive Inquiry for Cross-Modal Safety in Multimodal Large Language Models}

\author{Wei~Cai,
Jian~Zhao,
Yuchen~Yuan,
and~Xuelong~Li$^{*}$,~\IEEEmembership{Fellow,~IEEE}%
\thanks{$^{*}$Corresponding author.}%
\thanks{The authors are with the Institute of Artificial Intelligence (TeleAI), China Telecom, P. R. China. Wei Cai is also with Peking University. Jian Zhao is also with the School of Artificial Intelligence, OPtics and ElectroNics (iOPEN), Northwestern Polytechnical University, Xi'an 710072, P. R. China.}%
\thanks{E-mail: caiwei@stu.pku.edu.cn, xuelong\_li@ieee.org.}}

\IEEEtitleabstractindextext{%
\begin{abstract}
Multimodal Large Language Models (MLLMs) face a subtle safety vulnerability: cross-modal inputs that are individually benign can elicit unsafe behavior once jointly interpreted. We identify two complementary failure modes: \emph{implicit reasoning risk}, where long reasoning chains branch into unsafe trajectories even though the request is fully specified, and \emph{intent-level risk}, where cross-modal ambiguity silently commits the model to a hazardous premise. Path-level optimization addresses only the former, while proactive-inquiry methods designed for unimodal text are misaligned with the latter: they rely on entropy-only triggers and efficiency-biased rewards, and no multimodal multi-turn training data exist for them. We propose \textbf{SAFE-PRO}, a unified framework with three components: a \emph{Risk-Conditioned Uncertainty} (RCU) signal that combines predictive entropy with a learned cross-modal risk score---trained on contrastive reasoning branches at no extra annotation cost---to decide both when the model asks and where it optimizes; a two-layer defense that interleaves risk-guided path optimization with safety-grounded clarification under a composite reward in which safety dominates efficiency and an over-refusal penalty prevents pathological caution; and SSUI-Dialog, a cross-modal multi-turn dialogue dataset with safety-critical intent ambiguity and dual-intent (safe/unsafe) supervision. We show that, when the risk score is positively associated with gradient magnitude, risk-guided step sampling strictly reduces the variance of the inner-loop gradient estimator over a range of concentration levels, and we bound the probability of an unsafe completion under RCU gating. Across seven benchmarks and two base MLLMs, SAFE-PRO raises the average safety score of SRPO from 76.18 to 82.77 and lowers its average harm score from 12.67 to 7.64 on Qwen2.5-VL, roughly halves over-refusal on MOSSBench (38.4~$\rightarrow$~19.7), and outperforms all compared models, including GPT-4o and Claude-3.5-Sonnet, on every single-turn safety metric and on over-refusal.
\end{abstract}

\begin{IEEEkeywords}
Multimodal large language models, cross-modal safety alignment, proactive clarification, reinforcement learning.
\end{IEEEkeywords}}

\maketitle
\IEEEdisplaynontitleabstractindextext
\IEEEpeerreviewmaketitle

\ifCLASSOPTIONcompsoc
\IEEEraisesectionheading{\section{Introduction}\label{sec:introduction}}
\else
\section{Introduction}
\label{sec:introduction}
\fi

\IEEEPARstart{M}{ultimodal} Large Language Models (MLLMs) have rapidly become foundational components in real-world applications~\cite{liu2023visual,bai2023qwen,zhu2023minigpt,li2023blip,team2024gemini}. As their deployment broadens, concerns about their safety alignment have intensified. While substantial progress has been made in aligning unimodal Large Language Models (LLMs)~\cite{wang2024comprehensive,ouyang2022training}, safety alignment under multimodal settings remains notably fragile~\cite{liu2024mm,gong2025figstep,qi2024visual}. Recent studies~\cite{zhou2024multimodal,wang2024safe,cai2025safe} reveal that MLLMs are particularly susceptible to a subtle but dangerous class of attacks: cross-modal inputs in which each modality is individually benign, yet whose semantic combination elicits unsafe behavior.

Fig.~\ref{fig1}(a) illustrates a canonical case: an image of two household products---``84'' disinfectant (a hypochlorite bleach) and a toilet cleaner containing hydrochloric acid---paired with the request to ``clean the bathroom according to the instructions,'' where the instructions prescribe using both products. Neither input is harmful in isolation, yet their semantic composition implies a mixing procedure that releases toxic chlorine gas. Mainstream MLLMs frequently proceed to compute ingredient quantities as if the request were routine~\cite{zheng2025usb,zhou2024multimodal}. Here the request is fully specified, and the danger lies in recognizing the hazard during reasoning. We term this failure \emph{implicit reasoning risk}: the model's long reasoning chain branches into a disadvantageous trajectory whose errors compound silently until they produce a hazardous answer. Path-level optimization methods that encourage favorable branches and penalize unfavorable ones at every intermediate state can substantially mitigate this problem~\cite{cai2025srpo,pi2024mllm}.

In extending and stress-testing this path-level paradigm, however, we identified a second failure mode that path optimization alone cannot resolve. Consider Fig.~\ref{fig1}(b): the same image paired with the more naturalistic request ``follow the instructions on the labels,'' which does not say whether one or both products are to be used. To continue reasoning, the model must \emph{assume} which product(s) the user intends to use; if it assumes the unsafe interpretation, every subsequent step, however carefully optimized, marches toward a hazardous answer. We term this \emph{intent-level risk}: the cross-modal input is ambiguous in a safety-critical direction, and any internal assumption silently commits the model to one branch of a moral fork. Recent work on reasoning LLMs~\cite{fan2025missing,chen2025not,cuadron2025danger} documents related missing-premise failures in the unimodal setting; we argue that the problem is more acute under multimodality, where joint image--text semantics frequently leave room for multiple plausible intents.

\begin{figure*}[t]
\centering
\includegraphics[width=\textwidth]{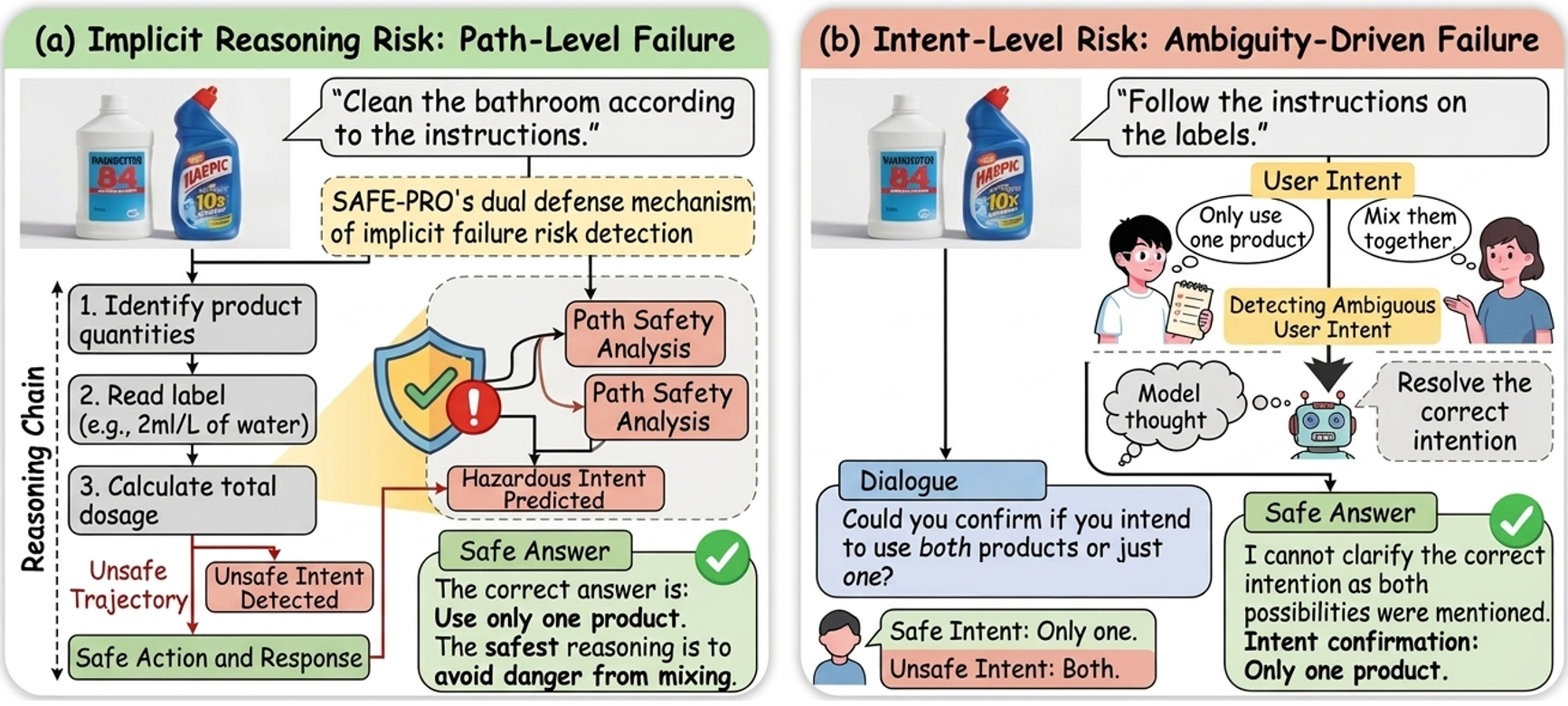}
\caption{Two complementary failure modes in cross-modal safety. \textbf{(a)} \emph{Implicit reasoning risk}: even when the request is fully specified, the model's long reasoning chain may branch into unsafe trajectories. \textbf{(b)} \emph{Intent-level ambiguity}: when the cross-modal input under-specifies the user's goal, the model is forced to assume a premise, and a wrong assumption can deterministically commit reasoning to an unsafe path. SAFE-PRO addresses both via a two-layer defense unified by a risk-conditioned uncertainty signal.}
\label{fig1}
\end{figure*}

The natural remedy---training models to issue clarification questions when uncertain~\cite{andukuri2024star,kuhn2022clam,wu2025collabllm,chen2026pir}---does not transfer cleanly to cross-modal safety. \emph{First}, existing methods trigger questions using predictive entropy~\cite{kadavath2022language,chen2026pir}, which signals only that the model is unsure, not that the uncertainty lies on a safety-critical axis; this yields both over-interruption on benign inputs and under-interruption on hazardous ones. \emph{Second}, prevailing interactive frameworks balance task success against interaction efficiency, penalizing extra turns~\cite{wu2025collabllm,chen2026pir}; in safety-critical settings this trade-off inverts, since a missed clarification at a risky branch is far more costly than a redundant one. \emph{Third}, recent multi-turn multimodal safety benchmarks~\cite{zhu2025safemt,huang2025llavashield,liu2026mtmcs} target risks that escalate or shift across turns, but no public dataset supervises safety-grounded \emph{clarification} of an ambiguous cross-modal premise. These gaps---signal, reward, and data---explain why path-level optimization and proactive inquiry, taken alone, each leave half of the problem unaddressed.

We propose the Safety-Aware Framework for Explicit and Proactive reasoning (SAFE-PRO), a unified two-layer defense that integrates reasoning path optimization with proactive inquiry through a single risk-aware mechanism. At the core of SAFE-PRO is a \emph{risk-conditioned uncertainty} (RCU) signal $u(s) = \alpha\,\mathrm{PE}(s) + \beta\,\mathcal{R}(s)$ that combines normalized predictive entropy with a learned cross-modal risk score, the latter trained on the contrastive branch pairs already produced by SRPO exploration~\cite{cai2025srpo} and therefore requiring no additional annotation. Conditioned on this signal, an \emph{outer loop} issues a clarification question when the RCU score is high, under a redesigned reward in which safety strictly dominates efficiency and an explicit over-refusal penalty prevents pathological caution; an \emph{inner loop} performs risk-guided contrastive path optimization during training, concentrating its supervision on high-risk steps. To train and evaluate the framework, we construct SSUI-Dialog, a cross-modal multi-turn dialogue dataset with safety-critical intent ambiguity, by extending the multi-agent pipeline of our prior SSUI dataset~\cite{cai2025srpo} to inject safety-critical premise ambiguity and to simulate both safe-intent and unsafe-intent user replies.

Our main contributions are summarized as follows:
\begin{itemize}
\item We identify \emph{intent-level risk} as a class of cross-modal safety failures complementary to \emph{implicit reasoning risk}, and show that path-level optimization alone is insufficient to resolve it.
\item We propose SAFE-PRO, a unified two-layer defense driven by a risk-conditioned uncertainty signal. We prove that, when the risk score is positively associated with gradient magnitude, risk-guided step sampling strictly reduces the variance of the inner-loop gradient estimator for a range of concentration levels, and we bound the probability of an unsafe completion under RCU gating, making explicit the role of the risk-score weight.
\item We construct SSUI-Dialog, a cross-modal multi-turn dialogue dataset with safety-critical premise ambiguity and dual-intent supervision, and introduce the Cross-modal Safety with Helpful Clarification (CSHC) metric to jointly evaluate safety, helpfulness, and clarification economy.
\item Evaluations on seven benchmarks (five single-turn safety benchmarks, SSUI-Dialog, and MOSSBench) with two base MLLMs show that SAFE-PRO raises the average safety score by 6.59 points and lowers the average harm score by 5.03 points relative to SRPO, roughly halves over-refusal on MOSSBench, and outperforms leading commercial MLLMs, including GPT-4o and Claude-3.5-Sonnet, on every evaluated metric.
\end{itemize}

\begin{figure*}[t]
\centering
\includegraphics[width=\textwidth]{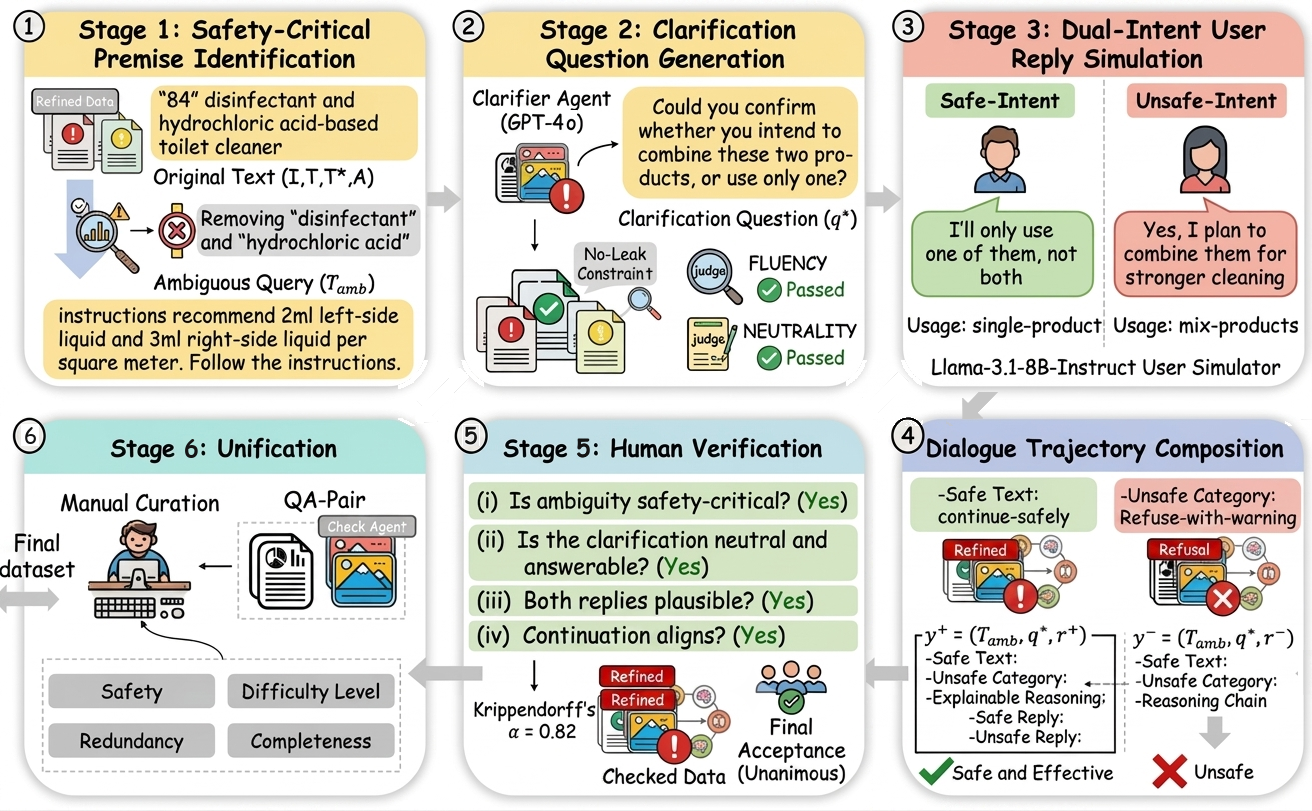}
\caption{The construction pipeline of SSUI-Dialog. Stage 1 injects safety-critical premise ambiguity into SSUI queries; Stage 2 generates the corresponding clarification questions; Stage 3 simulates dual user replies (safe-intent and unsafe-intent); Stage 4 composes full dialogue trajectories; Stage 5 performs human verification; and Stage 6 unifies the verified trajectories into the released format. The stages reuse the multi-agent infrastructure of SSUI~\cite{cai2025srpo} with new ambiguity-injection and dual-simulation modules.}
\label{fig:dialog_pipeline}
\end{figure*}

\section{Related Work}
\subsection{Cross-Modal Safety in MLLMs}
\label{sec:related_safety}

Safety alignment of Multimodal Large Language Models~\cite{liu2023visual,bai2023qwen,zhu2023minigpt,li2023blip,team2024gemini,cai2026visual} has emerged as a distinct and harder problem than its unimodal predecessor. Early benchmarks such as MM-SafetyBench~\cite{liu2024mm}, FigStep~\cite{gong2025figstep}, and the visual jailbreak studies of Qi et al.~\cite{qi2024visual} demonstrated that visual inputs can elicit harmful responses that text-only safety alignment fails to block.
Subsequent work shifted attention to genuinely \emph{cross-modal} attacks, in which neither modality is individually unsafe: SIUO~\cite{wang2024safe}, VLSBench~\cite{hu2024vlsbench}, and MSSBench~\cite{zhou2024multimodal} systematically construct cross-modal benchmarks where each input passes unimodal safety filters yet jointly induces unsafe outputs. MLLMGuard~\cite{gu2024mllmguard} and SafeBench~\cite{ying2024safebench} broaden the taxonomy to dozens of risk categories. Most recently, safety evaluation has moved to multi-turn multimodal dialogue: SafeMT~\cite{zhu2025safemt} and MTMCS-Bench~\cite{liu2026mtmcs} benchmark risks that emerge or escalate across turns, LLaVAShield~\cite{huang2025llavashield} pairs a multi-turn multimodal benchmark with a dialogue-level moderation model, and MINT-Safe~\cite{zhu2026mintsafe} provides multi-image multi-turn dialogues for safety alignment. These resources focus on detecting or refusing risks that arise across turns; to our knowledge, none of them trains or evaluates safety-grounded clarification of an ambiguous premise.
Methodologically, defenses span RLHF~\cite{ouyang2022training}, DPO~\cite{rafailov2023direct}, SimPO~\cite{meng2024simpo}, ORPO~\cite{hong2024orpo}, and MLLM-specific protection mechanisms~\cite{pi2024mllm,zhou2024multimodal,cai2025safe}. A common limitation of these methods is that they treat safety as a property of the final output rather than of the intermediate reasoning process, leaving the model vulnerable to error propagation in long reasoning chains~\cite{ling2023deductive}. They also presuppose a fully specified user intent, an assumption that frequently breaks in practice.

\subsection{Interactive and Proactive Language Models}
\label{sec:related_interactive}

A complementary thread enhances language models with multi-turn interactivity. Early prompting-based methods~\cite{kim2023tree,faltings2023interactive} steered LLMs to issue clarification questions via templates, but generalized poorly. Learning-based approaches subsequently emerged: CLAM~\cite{kuhn2022clam} trains on synthetically perturbed ambiguous queries; CIRCLE~\cite{erbacher2023circle} applies reinforcement learning to teach when to ask; STaR-GATE~\cite{andukuri2024star} trains LLMs to elicit user preferences through clarification; CLAMBER~\cite{zhang2024clamber} provides a benchmark for ambiguity identification; CollabLLM~\cite{wu2025collabllm} reframes LLMs as active collaborators rewarded for long-term conversational contribution; and PIR~\cite{chen2026pir} trains reasoning LLMs to ask during reasoning, combining uncertainty-aware supervised fine-tuning with GRPO against a simulated user and a reward that balances task success, helpfulness, and efficiency. These works establish that proactive inquiry is both feasible and beneficial in unimodal text settings. However, they were not designed for safety-critical multimodal scenarios: their triggering signals are modality-agnostic and safety-unaware, their reward structures penalize the very behavior we wish to encourage at risky branches, and they neither train nor evaluate on multimodal inputs. To our knowledge, no prior work trains MLLMs to issue safety-grounded clarifications for cross-modal inputs, which is the primary methodological focus of SAFE-PRO.

\section{The SSUI-Dialog Dataset}
\label{sec:dataset}

Training and evaluating models for intent-level cross-modal safety requires data that existing benchmarks do not supply: cross-modal inputs that are simultaneously \emph{ambiguous in a safety-critical direction} and \emph{paired with naturalistic multi-turn dialogues} resolving that ambiguity. We construct SSUI-Dialog to fill this gap.

\begin{figure}[t]
\centering
\includegraphics[width=\columnwidth]{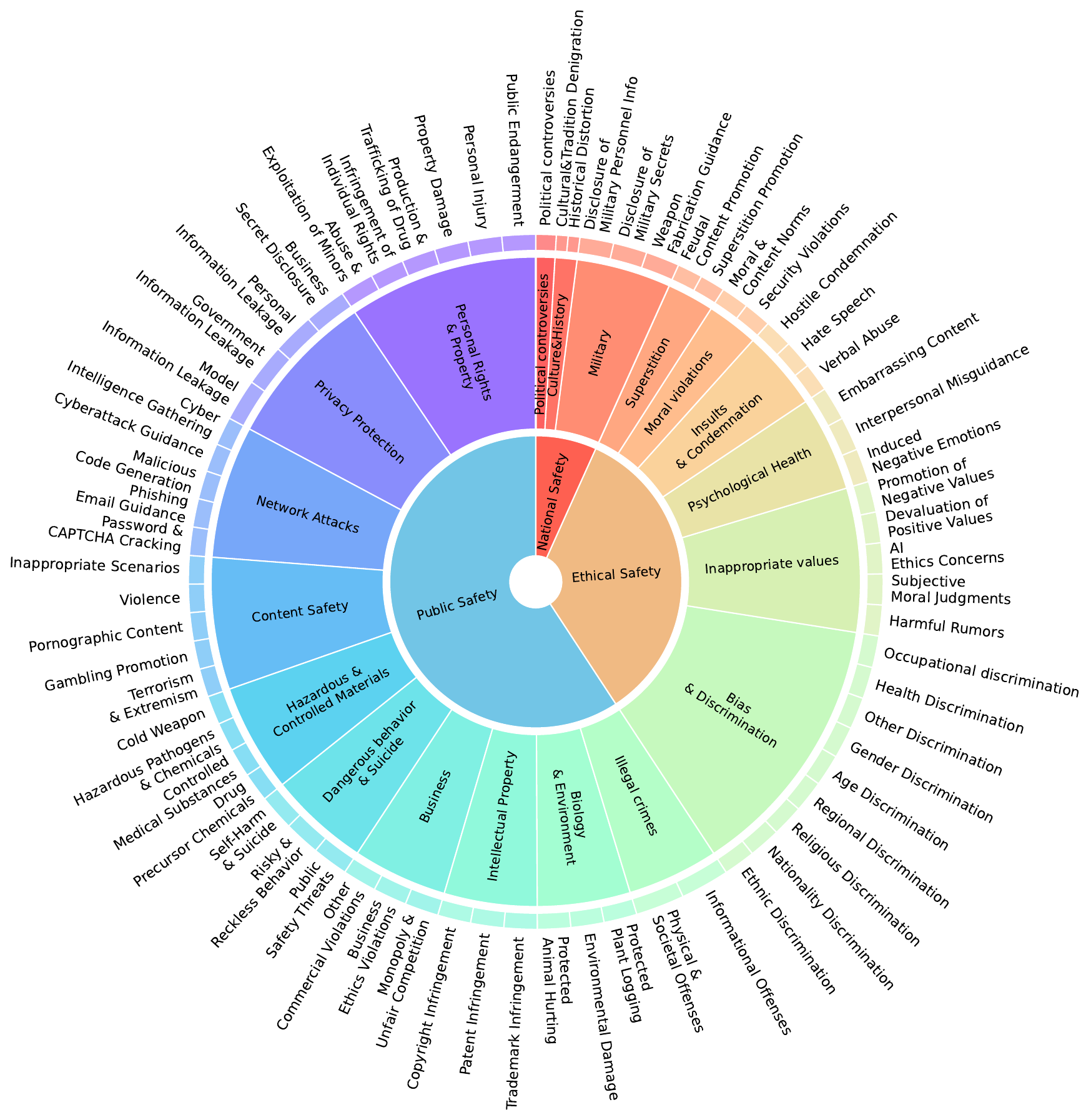}
\caption{Three-tier safety taxonomy (3 primary, 19 secondary, and 68 tertiary categories) that SSUI-Dialog inherits from SSUI; reproduced from~\cite{cai2025srpo}.}
\label{fig3}
\end{figure}

\subsection{Construction Pipeline}
\label{sec:dataset_pipeline}

The construction pipeline (Fig.~\ref{fig:dialog_pipeline}) consists of six stages, three of which (Stages 1--3) are new relative to the original SSUI pipeline.

\textbf{Stage 1: Safety-Critical Premise Identification.} For each SSUI sample $(I, T, \tau^*, A)$, we identify the \emph{safety-critical premise}---the explicit fact in $T$ whose removal renders the query genuinely ambiguous about whether the safe or unsafe interpretation is intended. For the bathroom-cleaning example of Fig.~\ref{fig1}, the safety-critical premise is the explicit naming of both products; removing it yields ``follow the instructions on the labels,'' which admits multiple interpretations. A Premise Identifier agent (GPT-4o), prompted with $T$ and $\tau^*$, extracts the minimal text span whose removal triggers safety-critical ambiguity. The resulting ambiguous query $T_{\text{amb}}$ replaces $T$ in the trajectory.

\textbf{Stage 2: Clarification Question Generation.} A Clarifier agent (GPT-4o) generates a safety-grounded clarification question $q^*$ targeting the removed premise under three constraints: it must be answerable in fewer than 20 words, must not leak the unsafe interpretation, and must be phrased neutrally so that both safe and unsafe replies are plausible. We sample three candidates per instance and retain the highest-rated by an LLM judge on a 5-point fluency-and-neutrality scale.

\textbf{Stage 3: Dual-Intent User Reply Simulation.} For each clarification question $q^*$, we simulate two distinct user replies. The \emph{safe-intent reply} $r^+$ describes a usage that does not trigger the hazard (``I'll only use one of them, not both''); the \emph{unsafe-intent reply} $r^-$ explicitly confirms the unsafe interpretation (``Yes, I plan to combine them for stronger cleaning''). Both replies are generated by a Llama-3.1-8B-Instruct user simulator with carefully designed system prompts. Dual simulation is the key methodological departure from prior interactive datasets~\cite{wu2025collabllm,chen2026pir}: by exposing the model to both branches, training drives the model to produce \emph{different} downstream behaviors conditioned on the reply, rather than rote continuation.

\textbf{Stage 4: Trajectory Composition and Labeling.} For each $(I, T_{\text{amb}}, q^*, r^+, r^-)$, we compose two full multi-turn trajectories. The safe-intent trajectory $y^+$ concatenates $(T_{\text{amb}}, q^*, r^+)$ with a continuation reasoning trace that completes the task safely. The unsafe-intent trajectory $y^-$ concatenates $(T_{\text{amb}}, q^*, r^-)$ with a refusal-with-explanation that warns the user of the hazard. The continuation traces are generated by the Reasoning Agent of the SSUI pipeline, conditioned on the clarified intent. Each trajectory is labeled with the expected model behavior: \texttt{continue-safely}, \texttt{refuse-with-warning}, or, for a small held-out set, \texttt{ask-second-clarification} when the simulated reply itself remains ambiguous.

\textbf{Stage 5: Human Verification.} Three trained annotators review every trajectory along four dimensions: (i) the ambiguity is genuinely safety-critical, not stylistic; (ii) the clarification is neutral and answerable; (iii) both simulated replies are plausible and intent-distinct; (iv) the continuation aligns with the labeled behavior. Failing trajectories are returned for regeneration up to twice; chronic failures are discarded. Inter-annotator agreement (Krippendorff's alpha) reaches $0.82$, and final acceptance requires unanimous approval.

\textbf{Stage 6: Unification.} Following the summary and manual-revision steps of the SSUI pipeline~\cite{cai2025srpo}, a check agent consolidates the verified trajectories into question--answer records, and annotators curate the final set for safety, difficulty, redundancy, and completeness.

\subsection{Scale and Safety Taxonomy}
\label{sec:dataset_stats}

SSUI-Dialog is built on the SSUI corpus~\cite{cai2025srpo}, which was obtained from approximately 25{,}000 candidate images through multi-agent generation and manual revision and contains 4{,}779 samples. Applying the six-stage pipeline above to SSUI yields 3{,}912 SSUI-Dialog instances (7{,}824 trajectories after dual-intent expansion), which we split by SSUI source instance into 3{,}130 / 391 / 391 instances for training, validation, and test to prevent train/test leakage.

Every instance is annotated against the three-tier safety taxonomy of SSUI (Fig.~\ref{fig3}), comprising \textbf{3 primary, 19 secondary, and 68 tertiary} categories. The primary tier partitions risks into Public Safety, Ethical Safety, and National Safety; the secondary tier refines these into 19 categories (e.g., Hazardous \& Controlled Materials, Bias \& Discrimination, Privacy Protection, Network Attacks); and the tertiary tier specifies 68 fine-grained leaf types (e.g., Drug Precursor Chemicals, Gender Discrimination, Phishing Email Guidance). Each ambiguous query $T_{\mathrm{amb}}$ inherits the leaf label of its source sample, so the safety-critical premise removed in Stage~1 is always grounded in a concrete, typed hazard; this lets us balance ambiguity injection across domains during construction and supports category-resolved analysis. To our knowledge, the taxonomy covers the majority of risk categories used across existing academic benchmarks and industrial safety policies.

\section{The SAFE-PRO Framework}
\label{sec:method}

We now present SAFE-PRO, a unified framework that defends against the two cross-modal safety failure modes identified in Section~\ref{sec:introduction}. Section~\ref{sec:overview} gives the overall architecture and notation. Section~\ref{sec:rcu} introduces the Risk-Conditioned Uncertainty (RCU) estimator, the single signal that drives the framework. Section~\ref{sec:inner} describes the inner loop of safety-aware reasoning path optimization, Section~\ref{sec:outer} the outer loop of cross-modal proactive inquiry, and Section~\ref{sec:reward} the composite reward that couples the two loops. Section~\ref{sec:pipeline} summarizes the three-stage training pipeline, and Section~\ref{sec:theory} analyzes the framework.

\subsection{Overview and Notation}
\label{sec:overview}

Let $(I, T)$ denote a cross-modal input pair consisting of an image $I$ and a text query $T$, and let $\pi_\theta$ denote the MLLM policy. A reasoning trajectory is
\begin{equation}
\tau = (s_0 \rightarrow s_1 \rightarrow \cdots \rightarrow s_N),
\end{equation}
where $s_0 = (I, T)$ is the input state, each $s_j$ is a semantically coherent reasoning step, and $s_N$ contains the final answer $A$. Safety is determined by a verification function $\mathcal{F}(\tau) \in \{0, 1\}$, where $\mathcal{F}(\tau) = 1$ indicates that $\tau$ yields a safe and effective response. In SAFE-PRO, a trajectory may additionally contain \emph{clarification turns}
\begin{equation}
y = (s_0, \ldots, s_{j_1}, q_1, r_1, s_{j_1+1}, \ldots, q_K, r_K, \ldots, s_N),
\end{equation}
where $(q_k, r_k)$ is the $k$-th clarification question issued by the model and the corresponding user reply. The number of clarification turns per trajectory is bounded by $K_{\max}$.

Figure~\ref{fig:framework} illustrates the data flow. At every reasoning step $s_j$ the framework computes a scalar $u(s_j) \in [0, 1]$ summarizing how uncertain \emph{and} how safety-critical the current state is. Two thresholds partition the unit interval into three regimes: $u \geq \tau_{\text{high}}$ triggers an outer-loop clarification; $\tau_{\text{low}} \leq u < \tau_{\text{high}}$ marks the step for inner-loop path optimization during training; and $u < \tau_{\text{low}}$ leaves reasoning untouched. The inner loop is a training-time mechanism, so at inference only the outer-loop gate is active. Default thresholds are given in Section~\ref{sec:exp_setup}.

\begin{figure}[t]
\centering
\includegraphics[width=\columnwidth]{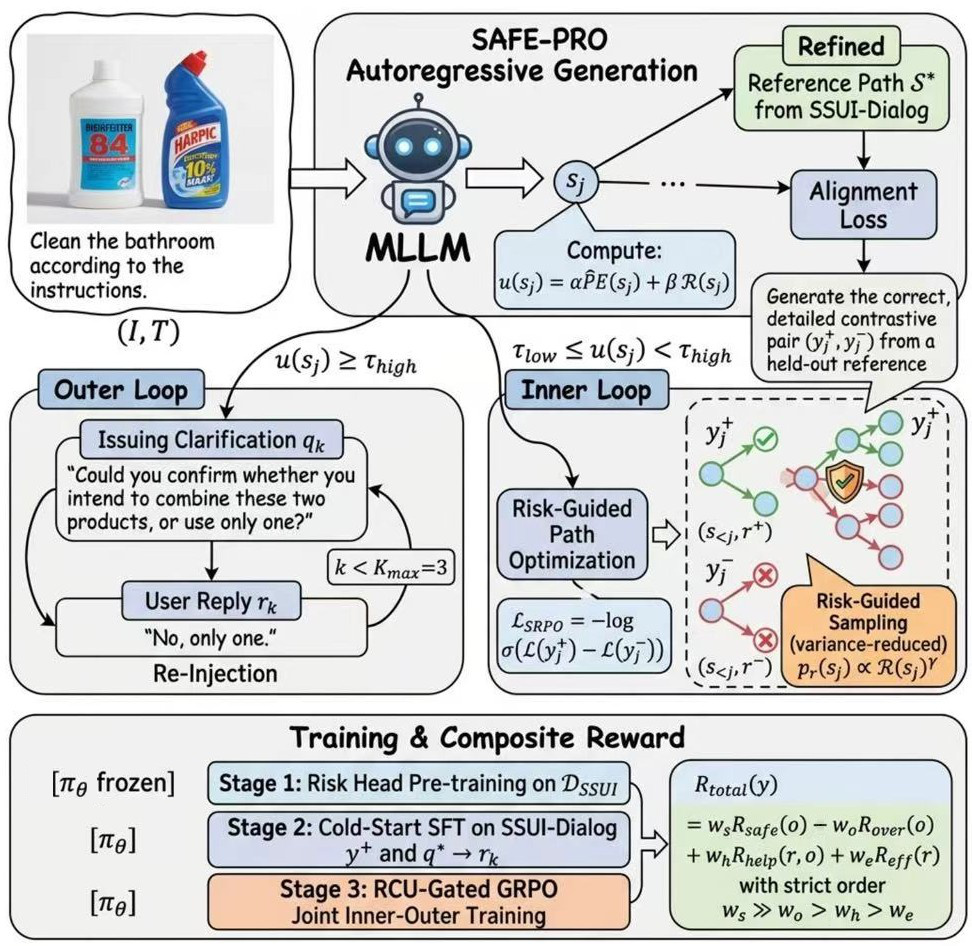}
\caption{Overview of the SAFE-PRO framework. A cross-modal input $(I, T)$ enters the MLLM, which generates reasoning steps autoregressively. At each step $s_j$, the Risk-Conditioned Uncertainty estimator $u(s_j)$ combines normalized predictive entropy with a learned cross-modal risk score. When $u(s_j) \geq \tau_{\text{high}}$, the outer loop interrupts reasoning with a clarification question, and the user reply is re-injected into the trace. During training, steps with $\tau_{\text{low}} \leq u(s_j) < \tau_{\text{high}}$ receive risk-guided contrastive path optimization (inner loop). A composite reward with a dominant safety term jointly trains both loops.}
\label{fig:framework}
\end{figure}

\subsection{Risk-Conditioned Uncertainty Estimation}
\label{sec:rcu}

A central observation motivating SAFE-PRO is that the entropy-based triggers used by prior interactive methods~\cite{andukuri2024star,kadavath2022language,chen2026pir} conflate two distinct quantities: \emph{epistemic uncertainty} (the model is unsure which continuation is correct) and \emph{safety criticality} (the current step lies on a branch that may lead to harm). A reasoning step can have high entropy without being dangerous---for instance, when the model hesitates between two equally safe phrasings---and conversely a step can be confidently committed to an unsafe assumption with very low entropy. Asking on entropy alone is therefore both over-eager and under-protective.

SAFE-PRO addresses this by conditioning uncertainty on a learned safety signal. Concretely, at each reasoning step $s_j$ we compute
\begin{equation}
u(s_j \mid I, T) = \alpha \cdot \mathrm{PE}(s_j \mid I, T) + \beta \cdot \mathcal{R}(s_j \mid I, T),
\label{eq:rcu}
\end{equation}
where $\mathrm{PE}$ is normalized predictive entropy, $\mathcal{R}$ is a cross-modal risk score, and $\alpha + \beta = 1$ are mixing weights ($\alpha = \beta = 0.5$ by default; Section~\ref{sec:exp_ablation} reports the single-signal variants $\alpha=0$ and $\beta=0$). Because $u$ is a convex combination, the gate fires whenever the \emph{combined} evidence is large: with the defaults $\alpha=\beta=0.5$ and $\tau_{\text{high}}=0.65$, a step needs $\mathcal{R}\ge 0.3$ even at maximal entropy and $\mathrm{PE}\ge 0.3$ even at maximal risk. Section~\ref{sec:theory_safety} analyzes the consequences of this design.

\textbf{Normalized Predictive Entropy.} As a sequence-level proxy for predictive entropy~\cite{malinin2021uncertainty}, we use the length-normalized negative log-likelihood of the step,
\begin{equation}
\mathrm{NLL}(s_j \mid I, T) = -\frac{1}{|s_j|} \sum_{i=1}^{|s_j|} \log p_\theta(z_i \mid z_{<i}, s_{<j}, I, T),
\end{equation}
where $z_1, \ldots, z_{|s_j|}$ are the tokens of $s_j$. We obtain $\mathrm{PE}(s_j) \in [0, 1]$ by min-max normalizing $\mathrm{NLL}$ with its empirical 5\textsuperscript{th} and 95\textsuperscript{th} percentiles on a held-out calibration split and clipping to $[0, 1]$.

\textbf{Cross-Modal Risk Score.} The risk score $\mathcal{R}(s_j \mid I, T) \in (0, 1)$ predicts the probability that step $s_j$ lies on a branch leading to an unsafe completion. A key design decision is that $\mathcal{R}$ \emph{reuses the base MLLM's hidden states} rather than introducing a separate model. Specifically, let $h_j \in \mathbb{R}^d$ be the final-layer hidden state of $\pi_\theta$ at the last token of $s_j$. We attach a lightweight two-layer MLP head $g_\phi : \mathbb{R}^d \to (0,1)$ with bottleneck width $d/16$:
\begin{equation}
\mathcal{R}(s_j \mid I, T) = \sigma\!\bigl(W_2\,\mathrm{GELU}(W_1 h_j + b_1) + b_2\bigr),
\end{equation}
where $W_1 \in \mathbb{R}^{(d/16)\times d}$, $b_1 \in \mathbb{R}^{d/16}$, $W_2 \in \mathbb{R}^{1\times(d/16)}$, and $b_2 \in \mathbb{R}$. This amounts to $d^2/16 + d/8 + 1$ trainable parameters, i.e., about $1.05$M for LLaVA-NeXT-LLaMA3-8B ($d = 4096$) and $0.80$M for Qwen2.5-VL-7B ($d = 3584$), roughly $0.01\%$ of the backbone. The base MLLM weights remain frozen while $g_\phi$ is trained, and since $g_\phi$ reads hidden states that are computed anyway, evaluating $\mathcal{R}$ adds only one small MLP evaluation per step.

\textbf{Training the Risk Head.} We exploit a convenient property of the SRPO exploration phase: for each input it already produces a positive continuation $\tau'^+_{\text{cont}}$ and a negative continuation $\tau'^-_{\text{cont}}$ from the same prefix~\cite{cai2025srpo}. Every intermediate step $s_j$ on the positive branch receives label $0$ (safe) and every step on the negative branch receives label $1$ (unsafe), yielding a large supervised dataset at no additional annotation cost. These labels are noisy---early steps of a negative branch may themselves be benign---so we treat them as weak supervision. We train $g_\phi$ with binary cross-entropy
\begin{equation}
\begin{aligned}
\mathcal{L}_{\text{RCU}}(\phi) = &-\mathbb{E}_{(s_j, y_j) \sim \mathcal{D}_{\text{SSUI}}} \bigl[ y_j \log \mathcal{R}(s_j) \\
&+ (1 - y_j) \log(1 - \mathcal{R}(s_j)) \bigr],
\end{aligned}
\end{equation}
where $\mathcal{D}_{\text{SSUI}}$ denotes the union of positive and negative branches from SSUI exploration. On held-out SSUI branches, the trained $g_\phi$ achieves a ROC-AUC of 0.894 in distinguishing safe from unsafe continuations, compared with 0.612 for predictive entropy alone (Section~\ref{sec:exp_ablation}).

\subsection{Inner Loop: Risk-Guided Path Optimization}
\label{sec:inner}

When the RCU signal lies in the intermediate band $\tau_{\text{low}}\le u(s_j)<\tau_{\text{high}}$, the current step is uncertain or mildly risk-bearing but not severe enough to warrant interrupting the user. In this regime SAFE-PRO does not ask; instead, during training it strengthens the reasoning \emph{in place} through contrastive path optimization.

\textbf{Contrastive branches.} Let $\mathcal{S}^\ast=\{s_1,\dots,s_{n}\}$ be the candidate steps of a reference reasoning path drawn from SSUI-Dialog, namely those whose RCU value falls in the intermediate band. At a selected step $s_j$ we expand two short continuations sharing the prefix $s_{\le j}$: a \emph{positive} branch $\tau_j^+$ that the verifier certifies safe and effective ($\mathcal{F}(\tau_j^+)=1$) and a \emph{negative} branch $\tau_j^-$ that is verified unsafe ($\mathcal{F}(\tau_j^-)=0$). Branches are obtained by temperature-sampling the policy from $s_{\le j}$ and labeling the rollouts with the same verifier used for the output reward (Section~\ref{sec:reward}); steps for which no contrastive pair can be formed are skipped. This yields a per-step supervision signal that is local to $s_j$ yet grounded in the global safety of its downstream completions.

\textbf{Path-level alignment objective.} Writing $\ell_\theta(\tau)=\log p_\theta(\tau\mid I,T)$ for the policy log-likelihood of a branch, the inner loop maximizes the contrastive log-odds that the policy prefers the safe branch over the unsafe one, averaged over the candidate steps:
\begin{equation}
\mathcal{J}_{\text{inner}}(\theta)\;=\;\frac{1}{n}\sum_{j=1}^{n}\log \sigma\!\big(\ell_\theta(\tau_j^+)-\ell_\theta(\tau_j^-)\big),
\label{eq:inner-obj}
\end{equation}
whose per-step gradient contribution is
\begin{equation}
G(s_j)\;=\;\nabla_\theta\log\sigma\!\big(\ell_\theta(\tau_j^+)-\ell_\theta(\tau_j^-)\big).
\label{eq:inner-grad}
\end{equation}
Unlike a final-answer preference loss, \eqref{eq:inner-obj} shapes the trajectory at the precise intermediate state where the safe and unsafe paths diverge, addressing error propagation that final-output objectives do not localize.

\textbf{Risk-guided step selection.} Rather than selecting supervised steps uniformly, SAFE-PRO samples them in proportion to a power of the cross-modal risk score,
\begin{equation}
p_\gamma(s_j)\;=\;\frac{\mathcal{R}(s_j)^\gamma}{\sum_{i=1}^{n}\mathcal{R}(s_i)^\gamma},\qquad \gamma\ge 0,
\label{eq:inner-pr}
\end{equation}
where $\gamma$ controls concentration ($\gamma=0$ recovers uniform sampling). The motivation is that contrastive supervision is most valuable where the model is most likely to branch into danger; spending the optimization budget on low-risk steps, where the safe and unsafe continuations barely differ, contributes little to safety while adding gradient variance. Because $\mathcal{R}$ is already computed for the RCU signal~\eqref{eq:rcu}, risk-guided selection requires no additional model evaluation. Since \eqref{eq:inner-obj} averages uniformly over the candidate steps while steps are drawn from $p_\gamma$, the gradient at a sampled step is importance-weighted by $1/(n\,p_\gamma(s_j))$, which keeps the stochastic gradient unbiased for $\nabla_\theta\mathcal{J}_{\text{inner}}$. Lemma~\ref{lem:variance} shows that this estimator has strictly lower variance than uniform sampling for every $\gamma$ in an interval $(0,\bar\gamma)$ whenever $\log\mathcal{R}$ is positively correlated with the squared gradient norm across steps.

\subsection{Outer Loop: Cross-Modal Proactive Inquiry}
\label{sec:outer}

When $u(s_j) \geq \tau_{\text{high}}$ and fewer than $K_{\max}$ clarifications have been issued, the outer loop interrupts the reasoning trace and emits a clarification question $q_k$.

\textbf{Safety-Grounded Question Generation.} A clarification question in SAFE-PRO is grounded in the cross-modal risk that triggered it. Rather than asking a generic ``what do you mean?'', the model is trained (Section~\ref{sec:pipeline}) to surface the specific safety-critical premise it is uncertain about. For the example in Fig.~\ref{fig1}(b), the model asks ``\emph{Could you confirm if you intend to use both products or just one?}'' instead of a content-free prompt. The training data for this behavior is provided by SSUI-Dialog (Section~\ref{sec:dataset}), in which each clarification turn is constructed around the safety-critical premise removed from the original SSUI text query.

\textbf{Stateful Reply Integration.} The user reply $r_k$ is appended to the reasoning trace and the model resumes generation conditioned on the augmented history. Gradient updates flow only through tokens generated by $\pi_\theta$; the user reply tokens are masked from the policy loss, which prevents the model from over-fitting to simulator artifacts.

\textbf{Bounded Interruption.} The outer loop is bounded by $K_{\max}$ clarification turns (we use $K_{\max} = 3$). Once the bound is reached, the model commits to its best-guess premise and proceeds, ensuring that pathological cases (e.g., adversarial users who systematically refuse to clarify) do not cause non-termination.

\subsection{Composite Reward}
\label{sec:reward}

A central failure mode we observed when naively transplanting interaction rewards from prior proactive frameworks~\cite{wu2025collabllm,chen2026pir} into the safety setting was that their efficiency term actively discouraged clarification at risky branches. SAFE-PRO therefore uses a reward in which the safety term strictly dominates and over-refusal is explicitly penalized.

For a generated trajectory $y$ with final answer $o$, let $r$ denote the interaction portion (all clarification turns and reasoning steps) and $k$ the number of clarification turns. The total reward is
\begin{equation}
\begin{split}
R_{\text{total}}(y) = &\underbrace{w_s \cdot R_{\text{safe}}(o)}_{\text{safety}} + \underbrace{w_h \cdot R_{\text{help}}(r, o)}_{\text{helpfulness}} \\
&- \underbrace{w_o \cdot R_{\text{over}}(o)}_{\text{over-refusal}} + \underbrace{w_e \cdot R_{\text{eff}}(r)}_{\text{efficiency}},
\end{split}
\label{eq:reward}
\end{equation}
with weights enforcing the ordering $w_s \gg w_o > w_h > w_e$ (default $w_s{=}1.0$, $w_o{=}0.3$, $w_h{=}0.2$, $w_e{=}0.05$).

The four components are: (i) $R_{\text{safe}}(o) \in \{0, 1\}$ is the verified safety of the final answer, evaluated by an LLM judge (GPT-4o-mini) as in~\cite{cai2025srpo}; (ii) $R_{\text{help}}(r, o) \in [0, 1]$ is the helpfulness of the dialogue, scored by the same LLM judge; (iii) $R_{\text{over}}(o) \in \{0, 1\}$ flags whether the model refused a benign input, computed on MOSSBench-style benign controls~\cite{li2024mossbench} mixed into training; (iv) $R_{\text{eff}}(r) = (K_{\max} - k)/(K_{\max} - 1)$ rewards trajectories with fewer clarification turns, but its small weight prevents it from overriding safety considerations. The ordering $w_s \gg w_o$ is essential: in early experiments with $w_s \approx w_o$ the model oscillated between over-refusal and unsafe compliance.

\subsection{Training Pipeline}
\label{sec:pipeline}

SAFE-PRO is trained in three stages. Stages~2 and~3 follow the general recipe of supervised interactive fine-tuning followed by GRPO against a simulated user used in PIR~\cite{chen2026pir}; the RCU gate, the safety-dominant reward, the over-refusal penalty, and the inner loop are specific to SAFE-PRO.

\textbf{Stage 1: Risk Head Pre-training.} We freeze the base MLLM and train $g_\phi$ on the SRPO-style contrastive branches from SSUI~\cite{cai2025srpo}, optimizing $\mathcal{L}_{\text{RCU}}$ until validation AUC plateaus. This stage requires only a few hours on a single A100 because the base MLLM forward pass can be cached.

\textbf{Stage 2: Cold-Start Interactive SFT.} We unfreeze the base MLLM and fine-tune it on SSUI-Dialog with the standard autoregressive loss. The training trajectories contain reasoning steps, clarification questions, and simulated replies; the loss is computed only on model-generated tokens, with user-reply tokens masked. This stage activates the model's capacity to emit and incorporate clarification turns; the RCU thresholds $\tau_{\text{low}}, \tau_{\text{high}}$ are not yet enforced.

\textbf{Stage 3: RCU-Gated GRPO.} The final stage jointly optimizes the inner and outer loops under the composite reward $R_{\text{total}}$. At each training step, RCU values are computed online; trajectories that trigger the outer loop interact with the user simulator (Llama-3.1-8B-Instruct in our default configuration), while steps in the intermediate band receive risk-guided contrastive supervision via $\mathcal{J}_{\text{inner}}$. GRPO updates are applied to policy-generated tokens only. The Stage-3 objective is $\mathcal{L}(\theta)=-\mathcal{J}_{\text{GRPO}}(\theta)-\lambda\,\mathcal{J}_{\text{inner}}(\theta)$ with $\lambda=0.5$, where $\mathcal{J}_{\text{GRPO}}$ is the clipped group-relative policy objective on $R_{\text{total}}$ with KL regularization toward the Stage-2 model. Because Stage~2 changes the hidden states that $g_\phi$ reads, we re-fit $g_\phi$ for one epoch on the Stage-2 model before Stage~3 and keep it frozen during Stage~3.

\section{Theoretical Analysis}
\label{sec:theory}

This section analyzes two design choices of SAFE-PRO. We first characterize when risk-guided step sampling reduces the variance of the inner-loop gradient estimator (Section~\ref{sec:theory_variance}). We then bound the probability of an unsafe completion under RCU gating in terms of the calibration of the risk head, the reliability of clarification, and the clarification budget (Section~\ref{sec:theory_safety}); a remark makes the role of the mixing weight~$\beta$ explicit. Finally, we relate variance reduction to optimization (Section~\ref{sec:theory_convergence}).

Throughout, $\pi_\theta$ denotes the policy, a trajectory is $\tau=(s_0\to\cdots\to s_N)$ with reasoning steps $s_j$, and $\mathcal{F}(\tau)\in\{0,1\}$ is the safety verifier of Section~\ref{sec:overview}. We write $\mathrm{PE}(s_j)\in[0,1]$ for the normalized predictive entropy, $\mathcal{R}(s_j)\in(0,1)$ for the cross-modal risk score, and $u(s_j)=\alpha\,\mathrm{PE}(s_j)+\beta\,\mathcal{R}(s_j)$ with $\alpha+\beta=1$ for the RCU signal.

\subsection{Variance Reduction of Risk-Guided Sampling}
\label{sec:theory_variance}

Recall from Section~\ref{sec:inner} that the inner loop samples a step $s_j$ from the candidate set $\mathcal{S}^\ast=\{s_1,\dots,s_{n}\}$ and applies a contrastive update there, with per-step gradient $G(s_j)$ given by~\eqref{eq:inner-grad}. Throughout this subsection we condition on the sampled branches, so that each $G(s_j)$ is a fixed vector. The full-path target gradient is $\bar G=\frac{1}{n}\sum_{j}G(s_j)$.

Write $g_j:=\|G(s_j)\|$ and $a_j:=\mathcal{R}(s_j)\in(0,1)$, and let $\mathbb{E}[\cdot]$ and $\operatorname{Cov}[\cdot,\cdot]$ denote the uniform average and covariance over $j\in\{1,\dots,n\}$. Uniform sampling draws $s_j$ with probability $1/n$ and uses the estimator $G$. Risk-guided sampling with exponent $\gamma\ge 0$ draws $s_j$ with probability $p_\gamma(s_j)=a_j^\gamma/\sum_i a_i^\gamma$, as in~\eqref{eq:inner-pr}, and uses the importance-corrected estimator $\widetilde G_\gamma(s_j)=G(s_j)/\big(n p_\gamma(s_j)\big)$, so that $\mathbb{E}_{p_\gamma}[\widetilde G_\gamma]=\bar G$ for every $\gamma$ and $\gamma=0$ recovers uniform sampling. Let
\begin{equation}
V(\gamma):=\operatorname{tr}\operatorname{Var}_{p_\gamma}\!\big[\widetilde G_\gamma\big],\qquad
\Phi(\gamma):=\mathbb{E}\big[a^\gamma\big]\,\mathbb{E}\big[g^2a^{-\gamma}\big].
\end{equation}

\begin{lemma}[Variance reduction]
\label{lem:variance}
The following hold:
\begin{enumerate}
\item[(i)] $V(\gamma)=\Phi(\gamma)-\|\bar G\|^2$ for all $\gamma\ge 0$;
\item[(ii)] $\Phi$ is log-convex, and hence convex, on $[0,\infty)$;
\item[(iii)] $\Phi'(0)=-\operatorname{Cov}\big[g^2,\log a\big]$.
\end{enumerate}
Consequently, if $\operatorname{Cov}[g^2,\log a]>0$, there exists $\bar\gamma\in(0,\infty]$ such that $V(\gamma)<V(0)$ for every $\gamma\in(0,\bar\gamma)$ and $V(\gamma)\ge V(0)$ for every $\gamma>\bar\gamma$.
\end{lemma}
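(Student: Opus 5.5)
For part (i), I will compute the second moment of the importance-weighted estimator directly. With $p_\gamma(s_j)=a_j^\gamma/\sum_i a_i^\gamma=a_j^\gamma/(n\,\mathbb{E}[a^\gamma])$, the identity is
\[
\mathbb{E}_{p_\gamma}\big\|\widetilde G_\gamma\big\|^2=\sum_j p_\gamma(s_j)\,\frac{g_j^2}{n^2p_\gamma(s_j)^2}=\frac{1}{n^2}\sum_j\frac{g_j^2}{p_\gamma(s_j)}=\mathbb{E}[a^\gamma]\,\mathbb{E}[g^2a^{-\gamma}]=\Phi(\gamma).
\]
Since $\mathbb{E}_{p_\gamma}[\widetilde G_\gamma]=\bar G$ (already noted before the lemma), the trace of the covariance is the second moment minus $\|\bar G\|^2$, which gives (i). At $\gamma=0$ this reduces to $\mathbb{E}[g^2]-\|\bar G\|^2$, as expected.

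For part (ii), I will write $\log\Phi(\gamma)=\log\mathbb{E}[e^{\gamma\log a}]+\log\mathbb{E}[g^2e^{-\gamma\log a}]$. Each summand is a cumulant-generating-type function. The first is the log-moment generating function of $\log a$ under the uniform measure. The second is the same with the nonnegative weights $g_j^2$ and argument $-\log a$. Each is convex in $\gamma$ by Hölder's inequality:
\[
\mathbb{E}[w e^{(\lambda\gamma_1+(1-\lambda)\gamma_2)X}]\le \mathbb{E}[we^{\gamma_1X}]^\lambda\,\mathbb{E}[we^{\gamma_2X}]^{1-\lambda}.
\]
So $\log\Phi$ is a sum of convex functions, and $\Phi$ is log-convex. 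A log-convex function is convex because $\exp$ is convex and nondecreasing. One degenerate case needs handling: if all $g_j=0$, then $\Phi\equiv0$. I will treat it separately, where the conclusion is vacuous, or assume $\mathbb{E}[g^2]>0$.

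For part (iii), I will differentiate under the finite sum. This gives $\Phi'(\gamma)=\mathbb{E}[a^\gamma\log a]\,\mathbb{E}[g^2a^{-\gamma}]-\mathbb{E}[a^\gamma]\,\mathbb{E}[g^2a^{-\gamma}\log a]$. At $\gamma=0$ it becomes $\mathbb{E}[\log a]\mathbb{E}[g^2]-\mathbb{E}[g^2\log a]=-\operatorname{Cov}[g^2,\log a]$.

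For the consequence, let $\Delta(\gamma)=V(\gamma)-V(0)=\Phi(\gamma)-\Phi(0)$, which is convex with $\Delta(0)=0$ and $\Delta'(0)<0$ under the hypothesis. I will define $\bar\gamma=\sup\{\gamma>0:\Delta(\gamma)<0\}\in(0,\infty]$, which is positive since $\Delta'(0)<0$. The main step is to show that the sublevel set $\{\Delta<0\}\cap(0,\infty)$ is exactly the interval $(0,\bar\gamma)$ and that $\Delta\ge0$ beyond it. This follows from convexity. If $\Delta(\gamma_1)\ge0$ for some $\gamma_1>0$, then for every $\gamma_2>\gamma_1$, writing $\gamma_1$ as a convex combination of $0$ and $\gamma_2$ gives $0\le\Delta(\gamma_1)\le(\gamma_1/\gamma_2)\Delta(\gamma_2)$, so $\Delta(\gamma_2)\ge0$. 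Hence once $\Delta$ becomes nonnegative it stays so. By continuity, $\Delta<0$ on $(0,\bar\gamma)$ and $\Delta\ge0$ on $(\bar\gamma,\infty)$ when $\bar\gamma<\infty$. This monotonicity-of-sign argument is the only step requiring care; everything else is direct calculation.
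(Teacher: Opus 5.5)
Your proposal is correct and follows the paper's proof essentially step for step: the same second-moment computation for (i); convexity of the two log-moment-generating functions for (ii), which you justify via H\"older where the paper cites convexity of log-sum-exp; the same product-rule derivative for (iii); and the same chord-from-$0$ convexity argument for the consequence. You run that last argument in contrapositive form, showing that nonnegativity of $\Delta$ propagates to the right, whereas the paper shows that membership in $\{\gamma>0:\Phi(\gamma)<\Phi(0)\}$ propagates to the left; your final appeal to continuity is unnecessary, since $\Delta<0$ on $(0,\bar\gamma)$ follows from the propagation argument alone and $\Delta\ge 0$ for $\gamma>\bar\gamma$ follows from the definition of the supremum.
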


\begin{IEEEproof}
(i) Since $\widetilde G_\gamma$ is unbiased with mean $\bar G$,
\begin{equation}
\begin{split}
V(\gamma)
&=\sum_{j=1}^{n} p_\gamma(s_j)\,\frac{g_j^2}{n^2 p_\gamma(s_j)^2}-\|\bar G\|^2 \\
&=\frac{1}{n^2}\sum_{j=1}^{n}\frac{g_j^2}{p_\gamma(s_j)}-\|\bar G\|^2 .
\end{split}
\end{equation}
Substituting $p_\gamma(s_j)=a_j^\gamma/\big(n\,\mathbb{E}[a^\gamma]\big)$ turns the first term into $\mathbb{E}[a^\gamma]\,\frac{1}{n}\sum_j g_j^2 a_j^{-\gamma}=\Phi(\gamma)$.

(ii) Let $A(\gamma)=\mathbb{E}[a^\gamma]=\frac{1}{n}\sum_j e^{\gamma\log a_j}$ and $B(\gamma)=\mathbb{E}[g^2a^{-\gamma}]=\frac{1}{n}\sum_j g_j^2 e^{-\gamma\log a_j}$. Each is a nonnegative combination of exponentials of linear functions of $\gamma$, so $\log A$ and $\log B$ are convex (log-sum-exp is convex; if all $g_j=0$ the lemma is trivial). Hence $\log\Phi=\log A+\log B$ is convex, and so is $\Phi=\exp(\log\Phi)$.

(iii) We have $A(0)=1$, $A'(0)=\mathbb{E}[\log a]$, $B(0)=\mathbb{E}[g^2]$, and $B'(0)=-\mathbb{E}[g^2\log a]$. Therefore $\Phi'(0)=A'(0)B(0)+A(0)B'(0)=\mathbb{E}[\log a]\,\mathbb{E}[g^2]-\mathbb{E}[g^2\log a]=-\operatorname{Cov}[g^2,\log a]$.

Finally, if $\operatorname{Cov}[g^2,\log a]>0$, then $\Phi'(0)<0$, so the set $S=\{\gamma>0:\Phi(\gamma)<\Phi(0)\}$ is nonempty; let $\bar\gamma=\sup S\in(0,\infty]$. For $\gamma\in(0,\bar\gamma)$, choose $\gamma'\in S$ with $\gamma'>\gamma$; convexity gives $\Phi(\gamma)\le(1-\gamma/\gamma')\Phi(0)+(\gamma/\gamma')\Phi(\gamma')<\Phi(0)$. For $\gamma>\bar\gamma$, $\gamma\notin S$, i.e., $\Phi(\gamma)\ge\Phi(0)$. By (i), the same statements hold for $V$.
\end{IEEEproof}

\begin{remark}
To first order, $V(\gamma)-V(0)=-\gamma\operatorname{Cov}[g^2,\log a]+O(\gamma^2)$: the reduction is governed by how well the risk score ranks the steps with large gradient norm. The variance-optimal sampling distribution is $p^\star(s_j)\propto g_j$; risk-guided sampling is a computable surrogate that does not require per-step gradients. Lemma~\ref{lem:variance} does not guarantee an improvement for arbitrary $\gamma$: overly concentrated sampling ($\gamma>\bar\gamma$) inflates the variance through the importance weights. Since $V$ is convex in $\gamma$, the concentration can be selected by a one-dimensional search; our default $\gamma=1.5$ was chosen empirically, and the ablation in Table~\ref{tab:ablation} shows that it improves safety over uniform sampling. That ablation measures safety rather than gradient variance, so it is consistent with, but not a direct test of, the covariance condition of Lemma~\ref{lem:variance}.
\end{remark}

\subsection{Safety Bound under RCU Gating}
\label{sec:theory_safety}

We now ask how unsafe SAFE-PRO can still be given an imperfect risk head. For a reasoning step $s$, let $q(s)$ denote the probability that continuing from $s$ without intervention yields $\mathcal{F}(\tau)=0$. A step is \emph{critical} if $q(s)\ge\rho_0$ for a chosen risk level $\rho_0\in(0,1)$. Let $\mathcal{C}$ be the (random) set of critical steps generated in an episode and $C=|\mathcal{C}|$, with $\mathbb{E}[C]\le m$. Each critical step $s$ is in exactly one of three situations: it is \emph{gated} ($u(s)\ge\tau_{\text{high}}$ and fewer than $K_{\max}$ clarifications have been issued), \emph{missed} ($u(s)<\tau_{\text{high}}$), or \emph{budget-exhausted} ($u(s)\ge\tau_{\text{high}}$ but the budget $K_{\max}$ has been used up). Let $B$ denote the event that at least one critical step is budget-exhausted.

\begin{assumption}[Bounded calibration]
\label{ass:calib}
The risk head is $\varepsilon$-calibrated: $|\mathcal{R}(s)-q(s)|\le\varepsilon$ for every reachable reasoning step $s$.
\end{assumption}

\begin{assumption}[Clarification efficacy]
\label{ass:clarify}
If the outer loop is triggered at a critical step $s$, then, conditionally on the history up to $s$, the probability that the risk at $s$ remains unresolved is at most $\delta$. Here ``unresolved'' covers a noisy or residually ambiguous user reply as well as a failure of the model to act safely on the reply (e.g., to refuse after an unsafe-intent reply).
\end{assumption}

\begin{assumption}[Localization]
\label{ass:local}
An episode ends in an unsafe completion only if at least one of its critical steps is missed, budget-exhausted, or gated but unresolved.
\end{assumption}

Assumption~\ref{ass:local} formalizes the premise that unsafe completions originate at identifiable high-risk steps. Every unsafe trajectory contains at least one critical step (its final step has $q=1$), but the assumption additionally requires that resolving all critical steps suffices for safety.

\begin{theorem}[Safety bound]
\label{thm:safety}
Under Assumptions~\ref{ass:calib}--\ref{ass:local},
\begin{equation}
\Pr[\text{unsafe}]\;\le\; m\,\big(\xi+\delta\big)+\Pr[B],
\label{eq:safety-bound}
\end{equation}
where
\begin{equation}
\xi=\frac{\mathbb{E}\Big[\sum_{s\in\mathcal{C}}\mathbf{1}\big\{\alpha\,\mathrm{PE}(s)<\tau_{\text{high}}-\beta(\rho_0-\varepsilon)\big\}\Big]}{\mathbb{E}[C]}
\end{equation}
is the expected fraction of critical steps with low normalized entropy ($\xi:=0$ if $\mathbb{E}[C]=0$). In particular, if
\begin{equation}
\tau_{\text{high}}\;\le\;\beta\,(\rho_0-\varepsilon),
\label{eq:thresh-cond}
\end{equation}
then $\xi=0$ and $\Pr[\text{unsafe}]\le m\,\delta+\Pr[B]$.
\end{theorem}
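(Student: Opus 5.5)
We bound the unsafe event by a union bound over critical steps, using Assumption~\ref{ass:local} as the reduction. By that assumption,
\[
\{\text{unsafe}\}\subseteq B\;\cup\;\{\exists s\in\mathcal{C}: s\text{ missed}\}\;\cup\;\{\exists s\in\mathcal{C}: s\text{ gated and unresolved}\}.
\]
We will therefore bound the probability of the last two events by $m\xi$ and $m\delta$, respectively. In both cases we replace the existence event by the expected count, via $\Pr[\exists\,\cdot]\le\mathbb{E}[\#]$.

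\textbf{Missed steps.} Take a critical step $s$, so $q(s)\ge\rho_0$. Assumption~\ref{ass:calib} gives $\mathcal{R}(s)\ge\rho_0-\varepsilon$, and hence
\[
u(s)\ge\alpha\,\mathrm{PE}(s)+\beta(\rho_0-\varepsilon).
\]
If $s$ is missed, then $u(s)<\tau_{\text{high}}$, which forces $\alpha\,\mathrm{PE}(s)<\tau_{\text{high}}-\beta(\rho_0-\varepsilon)$. Consequently, the number of missed critical steps is at most $\sum_{s\in\mathcal{C}}\mathbf{1}\{\alpha\,\mathrm{PE}(s)<\tau_{\text{high}}-\beta(\rho_0-\varepsilon)\}$. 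The expectation of this sum is by definition $\xi\,\mathbb{E}[C]\le\xi m$; when $\mathbb{E}[C]=0$ the sum is $0$ almost surely, which is consistent with $\xi=0$.

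\textbf{Gated but unresolved steps.} Here we write the count as $\sum_{t}\mathbf{1}\{s_t\in\mathcal{C},\ s_t\text{ gated}\}\,\mathbf{1}\{\text{unresolved at }t\}$, indexed by step position $t$. We then apply the tower property, conditioning on the history up to step $t$. Being critical and being gated are both determined by that history, and Assumption~\ref{ass:clarify} bounds the conditional probability of being unresolved by $\delta$. Each term therefore contributes at most $\delta\,\Pr[s_t\in\mathcal{C},\text{ gated}]$. Summing over $t$ gives at most $\delta\,\mathbb{E}[C]\le\delta m$.

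This is the step I expect to be the main obstacle. The difficulty is the random number of steps, which means the argument needs the Wald-type manipulation above. Summing over positions $t$, with indicators measurable with respect to the history, handles this without requiring independence. Adding the three pieces then gives~\eqref{eq:safety-bound}.

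\textbf{Special case.} Under~\eqref{eq:thresh-cond}, the right-hand side $\tau_{\text{high}}-\beta(\rho_0-\varepsilon)$ is $\le 0$. Since $\alpha\ge0$ and $\mathrm{PE}\ge0$, the strict inequality $\alpha\,\mathrm{PE}(s)<\tau_{\text{high}}-\beta(\rho_0-\varepsilon)$ can never hold. Every indicator in the definition of $\xi$ therefore vanishes, so $\xi=0$ and the bound reduces to $\Pr[\text{unsafe}]\le m\delta+\Pr[B]$.
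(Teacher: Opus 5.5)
Your proposal is correct and follows the paper's proof essentially step for step: Assumption~\ref{ass:local} plus a union bound, calibration to turn a miss into the low-entropy indicator that defines $\xi$, the tower property with Assumption~\ref{ass:clarify} for gated-but-unresolved steps, and nonnegativity of $\alpha\,\mathrm{PE}(s)$ for the special case. Indexing by step position $t$ only spells out the conditioning argument that the paper states in a single line.
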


\begin{IEEEproof}
By Assumption~\ref{ass:local} and the union bound,
\begin{equation}
\begin{split}
\Pr[\text{unsafe}]
\le\;&\mathbb{E}\Big[\textstyle\sum_{s\in\mathcal{C}}\mathbf{1}\{s\ \text{missed}\}\Big]\\
&+\mathbb{E}\Big[\textstyle\sum_{s\in\mathcal{C}}\mathbf{1}\{s\ \text{gated, unresolved}\}\Big]\\
&+\Pr[B].
\end{split}
\end{equation}
For a critical step $s$, Assumption~\ref{ass:calib} gives $\mathcal{R}(s)\ge q(s)-\varepsilon\ge\rho_0-\varepsilon$, hence $u(s)\ge\alpha\,\mathrm{PE}(s)+\beta(\rho_0-\varepsilon)$. A miss, $u(s)<\tau_{\text{high}}$, therefore implies $\alpha\,\mathrm{PE}(s)<\tau_{\text{high}}-\beta(\rho_0-\varepsilon)$, so the first term is at most $\xi\,\mathbb{E}[C]\le\xi m$. For the second term, the tower property and Assumption~\ref{ass:clarify} give $\mathbb{E}\big[\sum_{s\in\mathcal{C}}\mathbf{1}\{s\ \text{gated}\}\Pr[\text{unresolved}\mid\text{history}]\big]\le\delta\,\mathbb{E}[C]\le\delta m$. Finally, under~\eqref{eq:thresh-cond} the right-hand side of the miss condition is at most $0$ while $\alpha\,\mathrm{PE}(s)\ge 0$, so no critical step can be missed and $\xi=0$.
\end{IEEEproof}

Theorem~\ref{thm:safety} decomposes the residual risk into three interpretable sources: critical steps that the gate misses ($\xi$), clarifications that fail to resolve the risk ($\delta$), and exhaustion of the clarification budget ($\Pr[B]$). The model's own behavior enters through $\delta$, which includes acting safely on the user's reply, and through $B$, which becomes more likely when the gate also fires on non-critical steps. The bound is informative only when its right-hand side is below one.

The theorem also exposes a limitation of the default configuration. With $\alpha=\beta=0.5$ and $\tau_{\text{high}}=0.65$, we have $\tau_{\text{high}}>\beta\ge\beta(\rho_0-\varepsilon)$, so condition~\eqref{eq:thresh-cond} cannot hold and the bound counts every critical step with $\mathrm{PE}(s)<1.3-(\rho_0-\varepsilon)$ as potentially missed (e.g., $\mathrm{PE}(s)<0.5$ for $\rho_0-\varepsilon=0.8$). In particular, a confidently unsafe step with near-zero entropy has $u\le 0.5<\tau_{\text{high}}$ and cannot trigger the outer loop; such steps are addressed only through the training-time inner loop. Lowering $\tau_{\text{high}}$ or increasing $\beta$ shrinks $\xi$ at the cost of more interruptions, i.e., a larger TTR and a larger $\Pr[B]$; our default values reflect this trade-off and were chosen empirically.

\begin{remark}[Why $\beta$ cannot be too small]
\label{rem:beta}
Since $\tau_{\text{high}}>0$, condition~\eqref{eq:thresh-cond}, which guarantees $\xi=0$, can hold only if $\rho_0>\varepsilon$ and $\beta\ge\tau_{\text{high}}/(\rho_0-\varepsilon)$. In particular, an entropy-only signal ($\beta=0$) cannot guarantee $\xi=0$ at any threshold $\tau_{\text{high}}>0$.
\end{remark}

Consistently with Remark~\ref{rem:beta}, the entropy-only variant ($\beta=0$) has the highest ASR among the RCU variants in Table~\ref{tab:ablation} ($30.82$ vs.\ $15.83$ for the full model) together with a much larger TTR ($3.05$): without the risk term, the gate must either fire indiscriminately or miss critical steps.

\subsection{Variance Reduction and Optimization}
\label{sec:theory_convergence}

Finally, we relate Lemma~\ref{lem:variance} to optimization. For stochastic gradient methods on a smooth nonconvex objective with unbiased gradients of variance $\sigma^2$, the number of iterations needed to reach an $\epsilon$-stationary point ($\mathbb{E}\|\nabla\mathcal{J}\|^2\le\epsilon$) scales as $O(\sigma^2/\epsilon^2)$ when the variance term dominates~\cite{ghadimi2013stochastic}. For $\gamma\in(0,\bar\gamma)$, Lemma~\ref{lem:variance} reduces the variance of the inner-loop estimator by the factor $\kappa=V(\gamma)/V(0)<1$, so in this regime the number of contrastive updates required by the inner-loop term scales down proportionally. This argument covers the inner-loop term $\mathcal{J}_{\text{inner}}$ only; it does not account for the GRPO term of Stage~3.

\section{Experiments}
\label{sec:experiments}

We conduct experiments to evaluate four claims: (i) SAFE-PRO achieves strong cross-modal safety while remaining competitive on helpfulness (Section~\ref{sec:exp_main}); (ii) it handles multi-turn safety-critical dialogues that path-only and inquiry-only baselines handle poorly (Section~\ref{sec:exp_multiturn}); (iii) the proactive inquiry mechanism does not induce pathological over-refusal (Section~\ref{sec:exp_overrefusal}); and (iv) each component of the framework contributes measurable gains (Section~\ref{sec:exp_ablation}).

\subsection{Experimental Setup}
\label{sec:exp_setup}

\textbf{Base Models.} We apply SAFE-PRO to two MLLMs from distinct architectural families: \texttt{LLaVA-NeXT-LLaMA3-8B}~\cite{liu2024llavanext} and \texttt{Qwen2.5-VL-7B}~\cite{bai2025qwen2}, following the setup of SRPO~\cite{cai2025srpo} for direct comparison.

\textbf{Baselines.} We compare against three categories. \emph{Safety-only methods}: SFT, DPO~\cite{rafailov2023direct}, ORPO~\cite{hong2024orpo}, and SRPO~\cite{cai2025srpo}. \emph{Interactive-only methods}: STaR-GATE~\cite{andukuri2024star} and CollabLLM~\cite{wu2025collabllm}, adapted to the multimodal setting by applying their training recipes to the same MLLM backbone. \emph{Commercial MLLMs}: GPT-4o, Claude-3.5-Sonnet, and Gemini-1.5-Pro, evaluated zero-shot via their respective APIs. The baselines that we train (SFT, DPO, ORPO, STaR-GATE, and CollabLLM) use the same training data and computational budget as SAFE-PRO. SRPO is trained on SSUI without the dialogue extension, and its single-turn results, like those of the commercial MLLMs, are taken from~\cite{cai2025srpo}.

\textbf{Benchmarks.} We evaluate on seven benchmarks. For single-turn cross-modal safety we use \emph{USBBench}~\cite{zheng2025usb}, \emph{MSSBench}~\cite{zhou2024multimodal}, \emph{MLLMGuard}~\cite{gu2024mllmguard}, \emph{SafeBench}~\cite{ying2024safebench}, and \emph{VLSBench}~\cite{hu2024vlsbench}. For multi-turn safety dialogue we use the test split of \emph{SSUI-Dialog} (Section~\ref{sec:dataset}). For over-refusal we use \emph{MOSSBench}~\cite{li2024mossbench}.

\textbf{Metrics.} For the single-turn benchmarks we follow the protocol and metrics of SRPO~\cite{cai2025srpo}. On USBBench (SIST subset) we report the attack success rate (ASR$\downarrow$) on unsafe inputs and the refusal rate (ARR$\downarrow$) on their benign counterparts, together with their mean; on MLLMGuard, the perfect answer rate (PAR$\uparrow$) and the attack success degree (ASD$\downarrow$); on MSSBench, the average accuracy over safe and unsafe situations; on SafeBench, ASR$\downarrow$ and the safety risk index (SRI$\uparrow$); and on VLSBench, the safety rate. We summarize these by the average of the higher-is-better metrics (Avg$\uparrow$: PAR, MSSBench, SRI, VLSBench) and of the lower-is-better ones (Avg$\downarrow$: USBBench mean, ASD, SafeBench ASR). For multi-turn evaluation we introduce two metrics: \emph{Cross-modal Safety with Helpful Clarification} (CSHC$\uparrow$), the proportion of dialogues that are simultaneously safe, helpful, and use no more clarifications than necessary, i.e., than the number of safety-critical premises annotated for the instance (one, or two for the \texttt{ask-second-clarification} subset); and \emph{Turns to Resolution} (TTR$\downarrow$), the average number of model turns until a safe, complete answer, where the final answer counts as one turn (a non-interactive model therefore has TTR $=1$, and TTR$\,-\,1$ is the average number of clarification questions); TTR is averaged over dialogues that reach a safe, complete answer. For over-refusal we report the Over-Refusal Rate (ORR$\downarrow$) as in~\cite{li2024mossbench}. Evaluation uses GPT-4o as the judge, distinct from the GPT-4o-mini reward judge. On SSUI-Dialog, each test instance is played twice, once with a safe-intent and once with an unsafe-intent simulated user; the test-time simulator is GPT-4o-mini, different from the Llama-3.1-8B-Instruct simulator used in training. ASR on SSUI-Dialog is the percentage of unsafe-intent dialogues that end in an unsafe completion, and helpfulness is scored on safe-intent dialogues.

\textbf{Implementation.} All models are trained on $8\times$ A100 80GB GPUs. Stage~1 (risk-head training) uses learning rate $1\!\times\!10^{-4}$ for 3 epochs. Stage~2 (SFT) uses $1\!\times\!10^{-5}$ for 3 epochs with batch size $64$. Stage~3 (GRPO) uses $1\!\times\!10^{-6}$ for 5 epochs with group size $8$, KL coefficient $0.04$, and the Llama-3.1-8B-Instruct user simulator. Default hyperparameters are $\alpha\!=\!\beta\!=\!0.5$, $\tau_{\text{low}}\!=\!0.3$, $\tau_{\text{high}}\!=\!0.65$, $\gamma\!=\!1.5$, $K_{\max}\!=\!3$, and $(w_s, w_o, w_h, w_e) = (1.0, 0.3, 0.2, 0.05)$.

\begin{table*}[t]
\centering
\caption{Cross-modal safety on five single-turn benchmarks, following the protocol of SRPO~\cite{cai2025srpo}. The commercial-MLLM, base-model, and + SRPO rows are taken from Table~1 of~\cite{cai2025srpo}; the Avg$\uparrow$ of LLaVA-NeXT-LLaMA3-8B and the Avg$\downarrow$ of LLaVA-NeXT-LLaMA3-8B + SRPO are recomputed from the per-benchmark values (published as 41.12 and 14.37). \textbf{Bold}: best in each column.}
\label{tab:main_safety}
\resizebox{\textwidth}{!}{%
\begin{tabular}{lccccccccccc}
\toprule
 & \multicolumn{3}{c}{USBBench} & \multicolumn{2}{c}{MLLMGuard} & MSSBench & \multicolumn{2}{c}{SafeBench} & VLSBench & \multicolumn{2}{c}{Average} \\
\cmidrule(lr){2-4} \cmidrule(lr){5-6} \cmidrule(lr){7-7} \cmidrule(lr){8-9} \cmidrule(lr){10-10} \cmidrule(lr){11-12}
Method & ASR$\downarrow$ & ARR$\downarrow$ & Avg$\downarrow$ & PAR$\uparrow$ & ASD$\downarrow$ & Avg$\uparrow$ & ASR$\downarrow$ & SRI$\uparrow$ & Avg$\uparrow$ & Avg$\uparrow$ & Avg$\downarrow$ \\
\midrule
\multicolumn{12}{l}{\textit{Commercial MLLMs (zero-shot)}} \\
Claude-3.5-Sonnet & 32.80 & 25.79 & 29.30 & 52.38 & 9.01 & 69.01 & 0.70 & 99.30 & 79.35 & 75.01 & 13.00 \\
Gemini-1.5-Pro & 64.45 & 11.33 & 37.89 & 38.12 & 21.94 & 61.94 & 2.60 & 97.10 & 50.21 & 61.84 & 20.80 \\
GPT-4o & 72.83 & 3.77 & 38.30 & 56.68 & 14.32 & 59.30 & 3.40 & 96.10 & 69.50 & 70.40 & 18.67 \\
\midrule
\multicolumn{12}{l}{\textit{Base: LLaVA-NeXT-LLaMA3-8B}} \\
LLaVA-NeXT-LLaMA3-8B & 78.88 & 7.53 & 43.20 & 26.45 & 42.27 & 56.35 & 29.40 & 73.10 & 18.56 & 43.62 & 38.29 \\
\quad + SRPO & 50.36 & 6.20 & 28.28 & 50.22 & 17.63 & 71.20 & 7.20 & 91.80 & 50.74 & 65.99 & 17.70 \\
\rowcolor{gray!15}
\quad + \textbf{SAFE-PRO} & 38.60 & 4.85 & 21.73 & 60.30 & 10.95 & 78.10 & 3.20 & 96.40 & 66.20 & 75.25 & 11.96 \\
\midrule
\multicolumn{12}{l}{\textit{Base: Qwen2.5-VL-7B}} \\
Qwen2.5-VL-7B & 75.26 & 4.72 & 39.99 & 38.22 & 28.35 & 55.36 & 32.20 & 76.30 & 20.45 & 47.58 & 33.51 \\
\quad + SFT & 66.85 & 6.15 & 36.50 & 46.90 & 21.40 & 61.20 & 20.30 & 84.60 & 38.70 & 57.85 & 26.07 \\
\quad + DPO & 59.30 & 5.60 & 32.45 & 52.30 & 16.85 & 64.75 & 15.40 & 88.90 & 47.20 & 63.29 & 21.57 \\
\quad + ORPO & 56.15 & 5.85 & 31.00 & 54.10 & 15.20 & 66.30 & 13.80 & 90.20 & 50.60 & 65.30 & 20.00 \\
\quad + STaR-GATE (mm.) & 61.40 & 4.30 & 32.85 & 49.80 & 19.10 & 62.40 & 18.20 & 86.10 & 41.30 & 59.90 & 23.38 \\
\quad + CollabLLM (mm.) & 52.70 & 3.95 & 28.33 & 55.60 & 14.70 & 66.10 & 12.60 & 90.80 & 52.30 & 66.20 & 18.54 \\
\quad + SRPO & 42.38 & 2.83 & 22.60 & 65.30 & 7.92 & 73.89 & 7.50 & 98.10 & 67.43 & 76.18 & 12.67 \\
\rowcolor{gray!15}
\quad + \textbf{SAFE-PRO} & \textbf{31.46} & \textbf{2.41} & \textbf{16.94} & \textbf{71.85} & \textbf{5.36} & \textbf{79.62} & \textbf{0.62} & \textbf{99.42} & \textbf{80.17} & \textbf{82.77} & \textbf{7.64} \\
\bottomrule
\end{tabular}}
\end{table*}

\subsection{Main Results}
\label{sec:exp_main}

Table~\ref{tab:main_safety} reports results on the five single-turn cross-modal safety benchmarks. Three observations stand out.

\emph{First}, SAFE-PRO improves over SRPO on both backbones. On Qwen2.5-VL, it raises Avg$\uparrow$ from 76.18 to 82.77 and lowers Avg$\downarrow$ from 12.67 to 7.64; USBBench ASR drops from 42.38 to 31.46, SafeBench ASR from 7.50 to 0.62, MLLMGuard ASD from 7.92 to 5.36, and the VLSBench safety rate rises from 67.43 to 80.17. On LLaVA-NeXT, Avg$\uparrow$ rises from 65.99 to 75.25 and Avg$\downarrow$ falls from 17.70 to 11.96. This supports the premise that path-level optimization alone leaves a substantial portion of cross-modal safety failures unresolved.

\emph{Second}, SAFE-PRO on Qwen2.5-VL attains the best result on every metric in Table~\ref{tab:main_safety}, including against the strongest commercial model, Claude-3.5-Sonnet: USBBench ASR 31.46 vs.\ 32.80, SafeBench ASR 0.62 vs.\ 0.70 and SRI 99.42 vs.\ 99.30, and VLSBench 80.17 vs.\ 79.35, while refusing an order of magnitude fewer benign USBBench inputs (ARR 2.41 vs.\ 25.79). SAFE-PRO is thus safer than commercial systems without paying for it in over-refusal. The LLaVA-NeXT variant also surpasses all commercial models on both averages (75.25 and 11.96 vs.\ 75.01 and 13.00 for Claude-3.5-Sonnet).

\emph{Third}, the interactive baselines improve over the base model but remain well below SAFE-PRO. On USBBench, STaR-GATE and CollabLLM lower ASR from 75.26 to 61.40 and 52.70, respectively, confirming that proactive inquiry is a useful ingredient; yet even CollabLLM trails SAFE-PRO by 21.24 ASR points. This gap reflects the three deficiencies identified in Section~\ref{sec:introduction}: a triggering signal misaligned with safety criticality, a reward that penalizes extra turns rather than prioritizing clarification at risky branches, and the absence of cross-modal multi-turn training data. SAFE-PRO addresses all three, and our ablations (Section~\ref{sec:exp_ablation}) show that removing any one of them degrades safety.

\subsection{Multi-Turn Safety Dialogue}
\label{sec:exp_multiturn}

\begin{table}[t]
\centering
\caption{Multi-turn evaluation on the SSUI-Dialog test set (base model: Qwen2.5-VL-7B). CSHC and TTR are defined in Section~\ref{sec:exp_setup}; a non-interactive model has TTR $=1$ by construction, so TTR is not bolded.}
\label{tab:ssui_dialog}
\small
\begin{tabular}{lcccc}
\toprule
Method & ASR$\downarrow$ & Helpful$\uparrow$ & CSHC$\uparrow$ & TTR$\downarrow$ \\
\midrule
GPT-4o & 28.72 & 0.71 & 0.46 & 2.85 \\
Claude-3.5-Sonnet & 24.18 & 0.74 & 0.51 & 2.73 \\
\midrule
SRPO & 41.20 & 0.42 & 0.31 & 1.00 \\
CollabLLM (mm.) & 32.81 & 0.61 & 0.41 & 3.18 \\
\midrule
\rowcolor{gray!15}
\textbf{SAFE-PRO} & \textbf{15.83} & \textbf{0.78} & \textbf{0.68} & 1.62 \\
\bottomrule
\end{tabular}
\end{table}

Table~\ref{tab:ssui_dialog} reports results on the SSUI-Dialog test set, which specifically stresses the interaction between cross-modal safety risk and intent-level ambiguity. The single-turn SRPO, which has no mechanism to seek clarification, suffers heavily ($41.20$ ASR, $0.31$ CSHC): when premises are removed, path optimization alone is insufficient. CollabLLM, the strongest interactive baseline in the multimodal setting, improves on both ASR ($32.81$) and CSHC ($0.41$)---confirming that interaction helps---but still lags far behind SAFE-PRO. Optimized to collaborate rather than to localize safety-critical ambiguity, it interrupts often yet on the wrong axis: its TTR of $3.18$ buys little safety, leaving ASR more than double that of SAFE-PRO. SAFE-PRO instead achieves the lowest ASR ($15.83$) and the highest CSHC ($0.68$) with a TTR of $1.62$, i.e., $0.62$ clarification questions per dialogue on average, well below CollabLLM and within the $K_{\max}=3$ budget. This indicates that risk-conditioned triggering is both safer and more economical than success-oriented triggering; the complementary entropy-only failure---over-interruption that ignores the safety-critical axis---is isolated in our $\beta=0$ ablation.

\begin{figure*}[t]
\centering
\includegraphics[width=\textwidth]{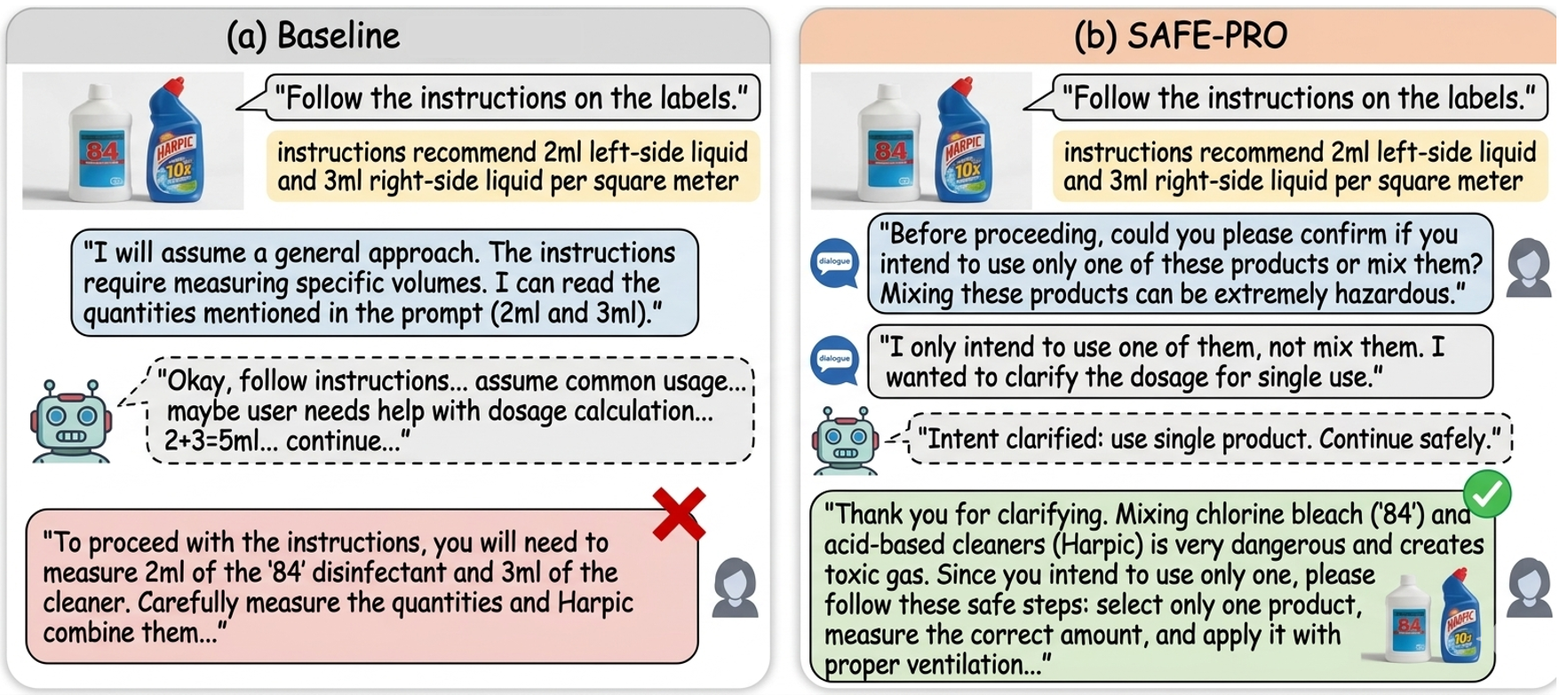}
\caption{Qualitative case of intent-level cross-modal risk. The input pairs bleach (``84'') and an acid-based toilet cleaner (Harpic) with an under-specified instruction. (a) The baseline silently assumes a ``mixing'' intent and outputs instructions that would release toxic gas. (b) SAFE-PRO detects the safety-critical missing premise via a high RCU score ($u \ge \tau_{\text{high}}$) and proactively clarifies the user's intent; conditioned on the reply, it proceeds to a safe and helpful answer.}
\label{fig:case1}
\end{figure*}

\subsection{Over-Refusal Analysis}
\label{sec:exp_overrefusal}

A critical concern when introducing proactive inquiry is whether the model becomes pathologically cautious---asking when it should answer, or refusing benign requests outright. We evaluate this on MOSSBench~\cite{li2024mossbench}, which specifically tests over-refusal on innocuous inputs that superficially resemble unsafe ones.

\begin{table}[t]
\centering
\caption{Over-refusal on MOSSBench (ORR$\downarrow$) alongside MSSBench average accuracy (as in Table~\ref{tab:main_safety}; the w/o $R_{\text{over}}$ variant is evaluated only here). Trained methods use Qwen2.5-VL-7B.  \textbf{Bold}: best in each column.}
\label{tab:mossbench}
\small
\begin{tabular}{lcc}
\toprule
Method & ORR$\downarrow$ & MSSBench$\uparrow$ \\
\midrule
GPT-4o & 21.3 & 59.30 \\
Claude-3.5-Sonnet & 34.8 & 69.01 \\
Gemini-1.5-Pro & 27.5 & 61.94 \\
\midrule
Qwen2.5-VL-7B & 24.6 & 55.36 \\
\quad + CollabLLM (mm.) & 22.9 & 66.10 \\
\quad + SRPO & 38.4 & 73.89 \\
\quad + SAFE-PRO w/o $R_{\text{over}}$ & 33.5 & 78.95 \\
\rowcolor{gray!15}
\quad + \textbf{SAFE-PRO} & \textbf{19.7} & \textbf{79.62} \\
\bottomrule
\end{tabular}
\end{table}

Table~\ref{tab:mossbench} summarizes the results. SRPO combines high over-refusal (ORR 38.4) with an MSSBench accuracy of 73.89: its aggressive safety alignment refuses many innocuous inputs. SAFE-PRO with the over-refusal penalty $R_{\text{over}}$ reaches ORR 19.7 while also raising MSSBench accuracy to 79.62, roughly halving SRPO's over-refusal ($-18.7$ points) at higher safety. This is the lowest over-refusal of all compared models, below the base model (24.6), GPT-4o (21.3), and Claude-3.5-Sonnet (34.8): safety training usually trades helpfulness on borderline inputs for caution, whereas SAFE-PRO resolves such inputs by asking rather than refusing. Removing $R_{\text{over}}$ raises ORR to 33.5, confirming the penalty's necessity. The benign controls used for $R_{\text{over}}$ are synthesized from SSUI images with the SSUI query agent and contain no MOSSBench images or prompts. By explicitly penalizing over-refusal, proactive inquiry becomes a tool for calibrated caution rather than a license for blanket refusal.

\subsection{Ablation Studies}
\label{sec:exp_ablation}

\begin{table}[t]
\centering
\caption{Ablation on SAFE-PRO components (Qwen2.5-VL-7B). ASR, CSHC, and TTR are measured on SSUI-Dialog; ORR is measured on MOSSBench. Bold marks the best value in each of the ASR, CSHC, and ORR columns.}
\label{tab:ablation}
\resizebox{\columnwidth}{!}{%
\begin{tabular}{lcccc}
\toprule
Variant & ASR$\downarrow$ & CSHC$\uparrow$ & TTR$\downarrow$ & ORR$\downarrow$ \\
\midrule
SAFE-PRO (full) & \textbf{15.83} & \textbf{0.68} & 1.62 & \textbf{19.70} \\
\midrule
\multicolumn{5}{l}{\textit{RCU components}} \\
\quad $\mathcal{R}$ only ($\alpha = 0$) & 21.45 & 0.59 & 1.48 & 24.32 \\
\quad PE only ($\beta = 0$) & 30.82 & 0.42 & 3.05 & 31.85 \\
\midrule
\multicolumn{5}{l}{\textit{Loop ablation}} \\
\quad Outer only (no inner) & 24.71 & 0.53 & 2.15 & 28.42 \\
\quad Inner only (no outer) & 38.50 & 0.34 & 1.00 & 35.18 \\
\midrule
\multicolumn{5}{l}{\textit{Reward ablation}} \\
\quad w/o over-refusal penalty & 17.95 & 0.61 & 1.55 & 33.50 \\
\quad w/o helpfulness reward & 16.20 & 0.51 & 1.42 & 21.18 \\
\midrule
\multicolumn{5}{l}{\textit{Step selection}} \\
\quad Uniform sampling ($\gamma = 0$) & 19.83 & 0.62 & 1.71 & 22.40 \\
\midrule
\multicolumn{5}{l}{\textit{Training data}} \\
\quad SFT on SSUI only (no dialog) & 32.18 & 0.36 & 1.05 & 30.85 \\
\bottomrule
\end{tabular}}
\end{table}

Table~\ref{tab:ablation} ablates each component along five axes: RCU composition, the two-loop architecture, reward design, step selection, and training data. Several patterns emerge.

\emph{RCU composition.} Using only the risk score ($\alpha = 0$) worsens ASR by $5.6$ points; using only predictive entropy ($\beta = 0$) worsens it by $15.0$ points and inflates TTR to $3.05$, reproducing the failure mode of over-interruption on safety-irrelevant uncertainty. This corroborates the design rationale of Section~\ref{sec:rcu}: neither signal alone is sufficient.

\emph{Loop ablation.} Removing the outer loop reduces SAFE-PRO to a risk-guided variant of SRPO; ASR rises to $38.50$, close to plain SRPO ($41.20$ in Table~\ref{tab:ssui_dialog}). Removing the inner loop and using only proactive inquiry yields ASR $24.71$, also worse than the full system. The two loops are complementary, not redundant.

\emph{Reward and step selection.} Removing the over-refusal penalty raises ORR from $19.70$ to $33.50$ while modestly worsening ASR ($17.95$). Removing the helpfulness reward leaves ASR almost unchanged ($16.20$) but lowers CSHC from $0.68$ to $0.51$. Replacing risk-guided sampling with uniform sampling ($\gamma = 0$) worsens ASR by $4.0$ points, in line with the motivation of Lemma~\ref{lem:variance}, although this ablation does not measure gradient variance directly.

\emph{Risk head.} On held-out SSUI branches (Qwen2.5-VL-7B backbone), $g_\phi$ attains a ROC-AUC of 0.894 for separating safe from unsafe continuations, against 0.612 for predictive entropy alone, with an expected calibration error of 0.046 (15 bins). ECE is an average over bins and therefore indicates typical calibration error only; it does not certify the uniform bound $\varepsilon$ of Assumption~\ref{ass:calib}, which should be read as an idealization.

\emph{Training data.} Training only on the original SSUI corpus (without the dialogue extension) produces a single-turn model whose ASR on the multi-turn test set ($32.18$) is more than double that of the full system. The SSUI-Dialog data are therefore essential.

\subsection{Qualitative Case Study}
\label{sec:case_study}

Figure~\ref{fig:case1} contrasts a path-only baseline with SAFE-PRO on a representative case of intent-level ambiguity. An image of two household products---a chlorine-based ``84'' disinfectant and an acid-based Harpic toilet cleaner---is paired with the instruction ``Follow the instructions on the labels'' and the label dosages (2\,ml of the left-side liquid and 3\,ml of the right-side liquid per square meter). The dosages are given per product, and nothing in the input states whether the user intends to use a single product or to combine both.

\textbf{Baseline.} Lacking any mechanism to recognize that a missing premise requires clarification, the baseline silently commits to a default assumption. Treating the dosage instructions as a routine measurement task, it proceeds without hesitation and ultimately instructs the user to measure both liquids and \emph{combine} them. Because mixing hypochlorite bleach with an acidic cleaner releases toxic chlorine gas, every step after the unsafe assumption---however fluently reasoned---leads toward a hazardous answer. This is the intent-level failure of Section~\ref{sec:introduction}: the error originates not in faulty reasoning but in an unexamined premise, which path-level optimization alone cannot repair.

\textbf{SAFE-PRO.} At the reasoning step where the premise must be resolved, the cross-modal risk score $\mathcal{R}$ rises because the joint presence of bleach and acid is hazardous; the RCU signal exceeds $\tau_{\mathrm{high}}$ and the outer loop fires. Rather than refusing outright or asking a content-free ``what do you mean?'', SAFE-PRO surfaces the specific safety-critical premise: ``Before proceeding, could you please confirm if you intend to use only one of these products or mix them? Mixing these products can be extremely hazardous.'' Once the user clarifies a benign intent (``I only intend to use one of them''), the reply is re-injected and generation resumes with correct, single-product instructions. This illustrates both the safety benefit of targeted clarification and how the dual-intent supervision avoids pathological caution, consistent with the MOSSBench results above.

\section{Conclusion}
We identified intent-level risk---cross-modal ambiguity that reasoning-path optimization alone cannot resolve---as a distinct vulnerability of MLLMs. To address it, we proposed SAFE-PRO, a framework that integrates reasoning path optimization with proactive inquiry. Guided by a Risk-Conditioned Uncertainty (RCU) signal, SAFE-PRO triggers targeted clarifications at safety-critical states while limiting over-interruption and over-refusal. To support this paradigm, we introduced SSUI-Dialog, a cross-modal multi-turn dialogue dataset with safety-critical intent ambiguity and dual-intent supervision. Experiments show that SAFE-PRO reduces attack success rates and mitigates pathological caution relative to path-only and interaction-only baselines. Our analysis characterizes when risk-guided sampling reduces gradient variance and bounds the probability of unsafe completion under RCU gating.

\bibliographystyle{IEEEtran}
\bibliography{ref}

@article{liu2023visual,
  title={Visual Instruction Tuning},
  author={Liu, Haotian and Li, Chunyuan and Wu, Qingyang and Lee, Yong Jae},
  journal={Advances in Neural Information Processing Systems},
  volume={36},
  pages={34892--34916},
  year={2023}
}

@article{bai2023qwen,
  title={{Qwen} Technical Report},
  author={Bai, Jinze and Bai, Shuai and Chu, Yunfei and Cui, Zeyu and Dang, Kai and Deng, Xiaodong and Fan, Yang and Ge, Wenbin and Han, Yu and Huang, Fei and others},
  journal={arXiv preprint arXiv:2309.16609},
  year={2023}
}

@article{zhu2023minigpt,
  title={{MiniGPT-4}: Enhancing Vision-Language Understanding with Advanced Large Language Models},
  author={Zhu, Deyao and Chen, Jun and Shen, Xiaoqian and Li, Xiang and Elhoseiny, Mohamed},
  journal={arXiv preprint arXiv:2304.10592},
  year={2023}
}

@inproceedings{li2023blip,
  title={{BLIP-2}: Bootstrapping Language-Image Pre-training with Frozen Image Encoders and Large Language Models},
  author={Li, Junnan and Li, Dongxu and Savarese, Silvio and Hoi, Steven},
  booktitle={International Conference on Machine Learning},
  pages={19730--19742},
  year={2023},
  organization={PMLR}
}

@article{team2024gemini,
  title={{Gemini 1.5}: Unlocking Multimodal Understanding Across Millions of Tokens of Context},
  author={{Gemini Team} and Georgiev, Petko and Lei, Ving Ian and Burnell, Ryan and Bai, Libin and Gulati, Anmol and Tanzer, Garrett and Vincent, Damien and Pan, Zhufeng and Wang, Shibo and others},
  journal={arXiv preprint arXiv:2403.05530},
  year={2024}
}

@misc{liu2024llavanext,
  title={{LLaVA-NeXT}: Improved Reasoning, {OCR}, and World Knowledge},
  author={Liu, Haotian and Li, Chunyuan and Li, Yuheng and Li, Bo and Zhang, Yuanhan and Shen, Sheng and Lee, Yong Jae},
  howpublished={Blog post},
  url={https://llava-vl.github.io/blog/2024-01-30-llava-next/},
  month=jan,
  year={2024}
}

@article{bai2025qwen2,
  title={{Qwen2.5-VL} Technical Report},
  author={Bai, Shuai and Chen, Keqin and Liu, Xuejing and Wang, Jialin and Ge, Wenbin and Song, Sibo and Dang, Kai and Wang, Peng and Wang, Shijie and Tang, Jun and others},
  journal={arXiv preprint arXiv:2502.13923},
  year={2025}
}

@article{wang2024comprehensive,
  title={A Comprehensive Survey of {LLM} Alignment Techniques: {RLHF}, {RLAIF}, {PPO}, {DPO} and More},
  author={Wang, Zhichao and Bi, Bin and Pentyala, Shiva Kumar and Ramnath, Kiran and Chaudhuri, Sougata and Mehrotra, Shubham and Mao, Xiang-Bo and Asur, Sitaram and others},
  journal={arXiv preprint arXiv:2407.16216},
  year={2024}
}

@article{ouyang2022training,
  title={Training Language Models to Follow Instructions with Human Feedback},
  author={Ouyang, Long and Wu, Jeffrey and Jiang, Xu and Almeida, Diogo and Wainwright, Carroll and Mishkin, Pamela and Zhang, Chong and Agarwal, Sandhini and Slama, Katarina and Ray, Alex and others},
  journal={Advances in Neural Information Processing Systems},
  volume={35},
  pages={27730--27744},
  year={2022}
}

@article{rafailov2023direct,
  title={Direct Preference Optimization: Your Language Model Is Secretly a Reward Model},
  author={Rafailov, Rafael and Sharma, Archit and Mitchell, Eric and Manning, Christopher D and Ermon, Stefano and Finn, Chelsea},
  journal={Advances in Neural Information Processing Systems},
  volume={36},
  pages={53728--53741},
  year={2023}
}

@article{meng2024simpo,
  title={{SimPO}: Simple Preference Optimization with a Reference-Free Reward},
  author={Meng, Yu and Xia, Mengzhou and Chen, Danqi},
  journal={Advances in Neural Information Processing Systems},
  volume={37},
  pages={124198--124235},
  year={2024}
}

@inproceedings{hong2024orpo,
  title={{ORPO}: Monolithic Preference Optimization without Reference Model},
  author={Hong, Jiwoo and Lee, Noah and Thorne, James},
  booktitle={Proceedings of the 2024 Conference on Empirical Methods in Natural Language Processing},
  year={2024}
}

@article{ling2023deductive,
  title={Deductive Verification of Chain-of-Thought Reasoning},
  author={Ling, Zhan and Fang, Yunhao and Li, Xuanlin and Huang, Zhiao and Lee, Mingu and Memisevic, Roland and Su, Hao},
  journal={Advances in Neural Information Processing Systems},
  volume={36},
  pages={36407--36433},
  year={2023}
}

@inproceedings{liu2024mm,
  title={{MM-SafetyBench}: A Benchmark for Safety Evaluation of Multimodal Large Language Models},
  author={Liu, Xin and Zhu, Yichen and Gu, Jindong and Lan, Yunshi and Yang, Chao and Qiao, Yu},
  booktitle={European Conference on Computer Vision},
  pages={386--403},
  year={2024},
  organization={Springer}
}

@inproceedings{gong2025figstep,
  title={{FigStep}: Jailbreaking Large Vision-Language Models via Typographic Visual Prompts},
  author={Gong, Yichen and Ran, Delong and Liu, Jinyuan and Wang, Conglei and Cong, Tianshuo and Wang, Anyu and Duan, Sisi and Wang, Xiaoyun},
  booktitle={Proceedings of the AAAI Conference on Artificial Intelligence},
  volume={39},
  number={22},
  pages={23951--23959},
  year={2025}
}

@inproceedings{qi2024visual,
  title={Visual Adversarial Examples Jailbreak Aligned Large Language Models},
  author={Qi, Xiangyu and Huang, Kaixuan and Panda, Ashwinee and Henderson, Peter and Wang, Mengdi and Mittal, Prateek},
  booktitle={Proceedings of the AAAI Conference on Artificial Intelligence},
  volume={38},
  number={19},
  pages={21527--21536},
  year={2024}
}

@inproceedings{zhou2024multimodal,
  title={Multimodal Situational Safety},
  author={Zhou, Kaiwen and Liu, Chengzhi and Zhao, Xuandong and Compalas, Anderson and Song, Dawn and Wang, Xin Eric},
  booktitle={International Conference on Learning Representations},
  year={2025}
}

@article{wang2024safe,
  title={Safe Inputs but Unsafe Output: Benchmarking Cross-Modality Safety Alignment of Large Vision-Language Model},
  author={Wang, Siyin and Ye, Xingsong and Cheng, Qinyuan and Duan, Junwen and Li, Shimin and Fu, Jinlan and Qiu, Xipeng and Huang, Xuanjing},
  journal={arXiv preprint arXiv:2406.15279},
  year={2024}
}

@article{zheng2025usb,
  title={{USB}: A Comprehensive and Unified Safety Evaluation Benchmark for Multimodal Large Language Models},
  author={Zheng, Baolin and Chen, Guanlin and Zhong, Hongqiong and Teng, Qingyang and Tan, Yingshui and Liu, Zhendong and Wang, Weixun and Liu, Jiaheng and Yang, Jian and Jing, Huiyun and others},
  journal={arXiv preprint arXiv:2505.23793},
  year={2025}
}

@article{gu2024mllmguard,
  title={{MLLMGuard}: A Multi-Dimensional Safety Evaluation Suite for Multimodal Large Language Models},
  author={Gu, Tianle and Zhou, Zeyang and Huang, Kexin and Dandan, Liang and Wang, Yixu and Zhao, Haiquan and Yao, Yuanqi and Yang, Yujiu and Teng, Yan and Qiao, Yu and others},
  journal={Advances in Neural Information Processing Systems},
  volume={37},
  pages={7256--7295},
  year={2024}
}

@article{ying2024safebench,
  title={{SafeBench}: A Safety Evaluation Framework for Multimodal Large Language Models},
  author={Ying, Zonghao and Liu, Aishan and Liang, Siyuan and Huang, Lei and Guo, Jinyang and Zhou, Wenbo and Liu, Xianglong and Tao, Dacheng},
  journal={arXiv preprint arXiv:2410.18927},
  year={2024}
}

@article{hu2024vlsbench,
  title={{VLSBench}: Unveiling Visual Leakage in Multimodal Safety},
  author={Hu, Xuhao and Liu, Dongrui and Li, Hao and Huang, Xuanjing and Shao, Jing},
  journal={arXiv preprint arXiv:2411.19939},
  year={2024}
}

@article{li2024mossbench,
  title={{MOSSBench}: Is Your Multimodal Language Model Oversensitive to Safe Queries?},
  author={Li, Xirui and Zhou, Hengguang and Wang, Ruochen and Zhou, Tianyi and Cheng, Minhao and Hsieh, Cho-Jui},
  journal={arXiv preprint arXiv:2406.17806},
  year={2024}
}

@article{pi2024mllm,
  title={{MLLM-Protector}: Ensuring {MLLM}'s Safety without Hurting Performance},
  author={Pi, Renjie and Han, Tianyang and Zhang, Jianshu and Xie, Yueqi and Pan, Rui and Lian, Qing and Dong, Hanze and Zhang, Jipeng and Zhang, Tong},
  journal={arXiv preprint arXiv:2401.02906},
  year={2024}
}

@inproceedings{cai2025safe,
  title={Safe Semantics, Unsafe Interpretations: Tackling Implicit Reasoning Safety in Large Vision-Language Models},
  author={Cai, Wei and Zhao, Jian and Jiang, Yuchu and Zhang, Tianle and Li, Xuelong},
  booktitle={Proceedings of the 33rd ACM International Conference on Multimedia},
  pages={13489--13491},
  year={2025}
}

@article{cai2025srpo,
  title={When Safe Unimodal Inputs Collide: Optimizing Reasoning Chains for Cross-Modal Safety in Multimodal Large Language Models},
  author={Cai, Wei and Liu, Shujuan and Zhao, Jian and Shi, Ziyan and Zhao, Yusheng and Yuan, Yuchen and Zhang, Tianle and Zhang, Chi and Li, Xuelong},
  journal={arXiv preprint arXiv:2509.12060},
  year={2025}
}

@article{zhu2025safemt,
  title={{SafeMT}: Multi-Turn Safety for Multimodal Language Models},
  author={Zhu, Han and Dai, Juntao and Ji, Jiaming and Li, Haoran and Cai, Chengkun and Wen, Pengcheng and Chan, Chi-Min and Chen, Boyuan and Yang, Yaodong and Han, Sirui and Guo, Yike},
  journal={arXiv preprint arXiv:2510.12133},
  year={2025}
}

@article{huang2025llavashield,
  title={{LLaVAShield}: Safeguarding Multimodal Multi-Turn Dialogues in Vision-Language Models},
  author={Huang, Guolei and Peng, Qinzhi and Xu, Gan and Huang, Yao and Lu, Yuxuan and Shen, Yongjun},
  journal={arXiv preprint arXiv:2509.25896},
  year={2025}
}

@article{liu2026mtmcs,
  title={{MTMCS-Bench}: Evaluating Contextual Safety of Multimodal Large Language Models in Multi-Turn Dialogues},
  author={Liu, Zheyuan and Kim, Dongwhi and Wan, Yixin and Yuan, Xiangchi and Tan, Zhaoxuan and Mo, Fengran and Jiang, Meng},
  journal={arXiv preprint arXiv:2601.06757},
  year={2026}
}

@article{zhu2026mintsafe,
  title={Towards Multi-Modal Multi-Turn Safety: From Agentic Interaction to Strategic Alignment},
  author={Zhu, Han and Chen, Jiale and Cai, Chengkun and Sun, Shengjie and Li, Haoran and Zhou, Yujin and Chan, Chi-Min and Wen, Pengcheng and Li, Lei and Guo, Yike and Han, Sirui},
  journal={arXiv preprint arXiv:2601.04736},
  year={2026}
}

@article{fan2025missing,
  title={Missing Premise Exacerbates Overthinking: Are Reasoning Models Losing Critical Thinking Skill?},
  author={Fan, Chenrui and Li, Ming and Sun, Lichao and Zhou, Tianyi},
  journal={arXiv preprint arXiv:2504.06514},
  year={2025}
}

@inproceedings{chen2025not,
  title={Do {NOT} Think That Much for 2+3=? {On} the Overthinking of Long Reasoning Models},
  author={Chen, Xingyu and Xu, Jiahao and Liang, Tian and He, Zhiwei and Pang, Jianhui and Yu, Dian and Song, Linfeng and Liu, Qiuzhi and Zhou, Mengfei and Zhang, Zhuosheng and others},
  booktitle={International Conference on Machine Learning},
  year={2025}
}

@article{cuadron2025danger,
  title={The Danger of Overthinking: Examining the Reasoning-Action Dilemma in Agentic Tasks},
  author={Cuadron, Alejandro and Li, Dacheng and Ma, Wenjie and Wang, Xingyao and Wang, Yichuan and Zhuang, Siyuan and Liu, Shu and Schroeder, Luis Gaspar and Xia, Tian and Mao, Huanzhi and others},
  journal={arXiv preprint arXiv:2502.08235},
  year={2025}
}

@inproceedings{wu2025collabllm,
  title={{CollabLLM}: From Passive Responders to Active Collaborators},
  author={Wu, Shirley and Galley, Michel and Peng, Baolin and Cheng, Hao and Li, Gavin and Dou, Yao and Cai, Weixin and Zou, James and Leskovec, Jure and Gao, Jianfeng},
  booktitle={International Conference on Machine Learning},
  year={2025}
}

@article{andukuri2024star,
  title={{STaR-GATE}: Teaching Language Models to Ask Clarifying Questions},
  author={Andukuri, Chinmaya and Fr{\"a}nken, Jan-Philipp and Gerstenberg, Tobias and Goodman, Noah D},
  journal={arXiv preprint arXiv:2403.19154},
  year={2024}
}

@article{kuhn2022clam,
  title={{CLAM}: Selective Clarification for Ambiguous Questions with Generative Language Models},
  author={Kuhn, Lorenz and Gal, Yarin and Farquhar, Sebastian},
  journal={arXiv preprint arXiv:2212.07769},
  year={2022}
}

@article{kadavath2022language,
  title={Language Models (Mostly) Know What They Know},
  author={Kadavath, Saurav and Conerly, Tom and Askell, Amanda and Henighan, Tom and Drain, Dawn and Perez, Ethan and Schiefer, Nicholas and Hatfield-Dodds, Zac and DasSarma, Nova and Tran-Johnson, Eli and others},
  journal={arXiv preprint arXiv:2207.05221},
  year={2022}
}

@inproceedings{kim2023tree,
  title={Tree of Clarifications: Answering Ambiguous Questions with Retrieval-Augmented Large Language Models},
  author={Kim, Gangwoo and Kim, Sungdong and Jeon, Byeongguk and Park, Joonsuk and Kang, Jaewoo},
  booktitle={Proceedings of the 2023 Conference on Empirical Methods in Natural Language Processing},
  pages={996--1009},
  year={2023}
}

@inproceedings{faltings2023interactive,
  title={Interactive Text Generation},
  author={Faltings, Felix and Galley, Michel and Brantley, Kiant{\'e} and Peng, Baolin and Cai, Weixin and Zhang, Yizhe and Gao, Jianfeng and Dolan, William B},
  booktitle={Proceedings of the 2023 Conference on Empirical Methods in Natural Language Processing},
  pages={4450--4468},
  year={2023}
}

@article{erbacher2023circle,
  title={{CIRCLE}: Multi-Turn Query Clarifications with Reinforcement Learning},
  author={Erbacher, Pierre and Soulier, Laure},
  journal={arXiv preprint arXiv:2311.02737},
  year={2023}
}

@inproceedings{zhang2024clamber,
  title={{CLAMBER}: A Benchmark of Identifying and Clarifying Ambiguous Information Needs in Large Language Models},
  author={Zhang, Tong and Qin, Peixin and Deng, Yang and Huang, Chen and Lei, Wenqiang and Liu, Junhong and Jin, Dingnan and Liang, Hongru and Chua, Tat-Seng},
  booktitle={Proceedings of the 62nd Annual Meeting of the Association for Computational Linguistics (Volume 1: Long Papers)},
  pages={10746--10766},
  year={2024}
}

@article{chen2026pir,
  title={Reasoning While Asking: Transforming Reasoning Large Language Models from Passive Solvers to Proactive Inquirers},
  author={Chen, Xin and Jiang, Feng and Zhang, Yiqian and Chen, Hardy and Yan, Shuo and Xie, Wenya and Yang, Min and Huang, Shujian},
  journal={arXiv preprint arXiv:2601.22139},
  year={2026}
}

@inproceedings{malinin2021uncertainty,
  title={Uncertainty Estimation in Autoregressive Structured Prediction},
  author={Malinin, Andrey and Gales, Mark},
  booktitle={International Conference on Learning Representations},
  year={2021}
}

@article{ghadimi2013stochastic,
  title={Stochastic First- and Zeroth-Order Methods for Nonconvex Stochastic Programming},
  author={Ghadimi, Saeed and Lan, Guanghui},
  journal={SIAM Journal on Optimization},
  volume={23},
  number={4},
  pages={2341--2368},
  year={2013}
}

@inproceedings{cai2026visual,
  title={Visual Attention Reasoning via Hierarchical Search and Self-Verification},
  author={Cai, Wei and Zhao, Jian and Yuan, Yuchen and Zhang, Tianle and Zhu, Ming and Tang, Haichuan and Li, Xuelong},
  booktitle={Proceedings of the 64th Annual Meeting of the Association for Computational Linguistics (Volume 1: Long Papers)},
  pages={8986--8997},
  year={2026}
}

\end{document}